\newtheorem{lemma}{Lemma}
\newtheorem{theorem}{Theorem}
\newtheorem{definition}{Definition}
\newtheorem{proposition}{Proposition}
\DeclareMathOperator*{\argmin}{\arg\min}
\DeclareMathOperator*{\eql}{:=}
\newcounter{tightlistcnt}
\begin{document}

\title{Collaborative Filtering with Stability}

\author[1]{Dongsheng Li}
\author[1]{Chao Chen}
\author[2]{Qin Lv}
\author[1]{Junchi Yan}
\author[2]{Li Shang}
\author[1]{Stephen M. Chu}
\affil[1]{IBM Research - China, Shanghai P. R. China 201203}
\affil[2]{University of Colorado Boulder, Boulder, Colorado USA 80309}

\maketitle

\begin{abstract}
Collaborative filtering (CF) is a popular technique in today's recommender systems,
and matrix approximation-based CF methods have achieved great success in both
rating prediction and top-N recommendation tasks.
However, real-world user-item rating matrices are typically sparse, incomplete and noisy,
which introduce challenges to the algorithm stability of matrix approximation, i.e., small changes
in the training data may significantly change the models. As a result, existing 
matrix approximation solutions yield low generalization performance, exhibiting
high error variance on the training data, and minimizing the training error may
not guarantee error reduction on the test data.
This paper investigates the algorithm stability problem of matrix
approximation methods and how to achieve stable collaborative filtering
via stable matrix approximation. We present a new algorithm design framework, which
(1) introduces new optimization objectives to guide stable matrix
approximation algorithm design, and (2) solves the optimization problem to obtain
stable approximation solutions with good generalization performance.
Experimental results on real-world datasets demonstrate
that the proposed method can achieve better accuracy compared with
state-of-the-art matrix approximation methods and ensemble methods
in both rating prediction and top-N recommendation tasks.
\end{abstract}


\section{Introduction}
\label{sec:intro}
Recommender systems have become essential components for many online
applications~\cite{linden2003amazon,Das2007}, and collaborative filtering (CF) methods
are popular in today's recommender systems due to its superior accuracy~\cite{adomavicius2005toward}.
Matrix approximation (MA) has been widely adopted in recommendation tasks on
both rating prediction~\cite{paterek2007improving,lee2013local,Beutel15,chen2016mpma}
and top-N recommendation~\cite{Hu08,Pan08,rendle09},
and achieved state-of-the-art recommendation accuracy~\cite{koren2009matrix}.
In MA-based CF methods~\cite{paterek2007improving,koren2009matrix},
a given user-item rating matrix is approximated using observed ratings (often sparse)
to obtain low dimensional user features and item features,
then user ratings on unrated items are predicted using the dot product of
corresponding user and item feature vectors. 
MA methods have the capability of reducing the dimensionality of
user/item rating vectors, hence are suitable for collaborative filtering applications with sparse
data~\cite{koren2009matrix}. Indeed, MA-based collaborative filtering has been widely
used in existing recommendation solutions~\cite{paterek2007improving,koren2009matrix,lee2013local,Beutel15,Hu08,Pan08,rendle09}.

The sparsity of the data, incomplete and noisy~\cite{Keshavan10,Candes12},
introduces challenges to the algorithm stability of matrix approximation-based
collaborative filtering methods. In MA-based CF methods,
models are easily biased due to the limited training data (sparse), and small
changes in the training data (noisy) can significantly change the models.
As demonstrated in this work, existing MA-based methods cannot provide
stable matrix approximations and hence stable collaborative filtering.
Such unstable matrix approximations introduce
high training error variance, and minimizing the training error may not
guarantee consistent error reduction on the test data, i.e.,
low generalization performance~\cite{Bousquet01,Srebro04,Srebro04MMMF}.
As such, the algorithm stability has direct
impact on generalization performance~\cite{Hardt15}, and an unstable MA method has low
generalization performance~\cite{Srebro04,Hardt15}.

Heuristic techniques, such as cross-validation and ensemble
learning~\cite{Koren08,mackey2011divide,lee2013local,Chen15},
can be adopted to improve the generalization performance of MA-based CF methods.
However, cross validation methods have the drawback that the amount of data
available for model learning is reduced~\cite{Kohavi95,Bousquet01}.
Ensemble MA methods~\cite{lee2013local,Chen15,li2017mrma} are computationally expensive
due to the training of sub-models.
Recently, the notion of ``algorithmic stability'' has been introduced to
investigate the theoretical bound of the generalization performance of learning
algorithms~\cite{Bousquet01,Bousquet02,Agarwal09,Shalev-Shwartz10,London13,Hardt15}.
It is timely to develop stable matrix approximation methods with low generation errors,
which are suitable for collaborative filtering applications with sparse, incomplete and noisy data.

This paper extends a stable matrix approximation algorithm design framework~\cite{li2016low}
to achieve stable collaborative filtering on both rating prediction and top-N recommendation.
It formulates new optimization objectives to derive stable matrix approximation algorithms,
namely SMA,
and solves the new optimization objectives to obtain SMA solutions with good
generalization performance. We first introduce the stability notion in MA,
and then develop theoretical guidelines for deriving MA solutions with high
stability.  Then, we formulate a new optimization problem for achieving stable
matrix approximation, in which minimizing the loss function can obtain solutions with high
stability, i.e., good generalization performance. Finally,
we develop a stochastic gradient descent method to solve the new optimization
problem. Experimental results on real-world datasets
demonstrate that the proposed SMA method can deliver a stable MA algorithm,
which achieves better accuracy
over state-of-the-art single MA methods and ensemble MA methods in both rating prediction
and top-N recommendation tasks.
The key contributions of this paper are as follows:
\begin{enumerate}
\item This work introduces the stability concept in matrix approximation methods,
which can provide theoretical guidelines for deriving stable matrix approximation methods to achieve stable collaborative filtering;
\item A stable matrix approximation algorithm design framework is proposed, which can
achieve high stability, i.e., high generalization performance by
designing and solving new optimization objectives derived based on stability analysis;
\item Evaluation using real-world datasets demonstrates that the proposed method
can make significant improvement in recommendation accuracy over state-of-the-art
matrix approximation-based collaborative filtering methods in both rating prediction
and top-N recommendation tasks.
\end{enumerate}

The rest of this paper is organized as follows:
Section~\ref{sec:problem} formulates the stability problem in matrix approximation.
Section~\ref{sec:theory} analyzes the stability of matrix approximation and formally proves our key observations.
Section~\ref{sec:algorithm} presents details of the proposed stable matrix approximation method
on both rating prediction task and top-N recommendation task.
Section~\ref{sec:experiment} presents the experimental results.
Section~\ref{sec:related} discusses related work, and
we conclude this work in Section~\ref{sec:conclusion}.

\section{Problem Formulation}
\label{sec:problem}
This section first summarizes matrix approximation methods, 
and then introduces the definition of stability w.r.t. matrix approximation. 
Next, we conduct quantitative analysis of the relationship between
algorithm stability and generalization error, and empirically
demonstrate that models with high stability will generalize well.

\subsection{Matrix Approximation}
In this paper, upper case letters, such as $R, U, V$ denote matrices.
For a targeted matrix $R\in\mathbb{R}^{m\times n}$, $\Omega$ denotes
the set of observed entries in $R$, and $\hat{R}$ denotes the approximation of $R$.
Generally, loss functions should be defined towards different matrix approximation tasks,
and $\hat{R}$ is determined by minimizing such loss functions~\cite{lee2001algorithms,mnih2007probabilistic,salakhutdinov2008bayesian,Yan10}.
Let loss function $Loss(R,\hat{R})$ be the error of approximating $R$ by $\hat{R}$,
then the general objective of matrix approximation can be described as follows:
\begin{equation}
\label{eqn:loss}
\hat{R} = \arg\min_{X}{Loss(R,X)}.
\end{equation}
$Loss(R,X)$ should vary for different tasks. For instance,
incomplete Singular Value Decomposition (SVD) usually adopts Frobenius norm~\cite{lee2013local}
to define loss function, and Compressed Sensing adopts nuclear norm~\cite{Donoho06}.

Among existing matrix approximation methods, the rank of $\hat{R}$ --- $r$ is
considered low in many scenarios, because $r\ll\min\{m,n\}$ can achieve good performance
in many collaborative filtering applications. This kind of matrix approximation methods is called
low-rank matrix approximation (LRMA).
The objective of $r$-rank matrix approximation is to determine two feature matrices, i.e.,
$U\in\mathbb{R}^{m\times r}, V\in\mathbb{R}^{n\times r}$, $r\ll\min\{m,n\}$, s.t.,
$\hat{R} = UV^T$.
Generally, the optimization problem of LRMA can be formally described as follows:
\begin{equation}
\label{eqn:loss_lr}
\hat{R} = \arg\min_{X}{Loss(R,X)},~s.t.,~rank(X)=r.
\end{equation}
Typically, the problems defined by Equation~\ref{eqn:loss} and Equation~\ref{eqn:loss_lr}
are often difficult non-convex optimization problems. Therefore, iterative methods such as
stochastic gradient descent (SGD) are usually adopted to
find solutions that will converge to local minimum~\cite{lee2001algorithms,mnih2007probabilistic}.


\subsection{Stability w.r.t Matrix Approximation}
Recent work on algorithmic stability~\cite{Bousquet01,Bousquet02,Lan08,London13} demonstrated that
a stable learning algorithm has the property that replacing one element in the training set
does not result in significant change to the algorithm's output. Therefore, if we take the training error
as a random variable, the training error of stable learning algorithm should have a
small variance. This implies that stable algorithms have the property that the training errors
are close to the test errors~\cite{Bousquet01,Lan08,London13}.
The rest of this section introduces and analyzes the algorithm stability problem of matrix
approximation.

In this section, we adopt Root Mean Square Error (RMSE) 
to measure the stability of matrix approximation as an example. Note that, other kinds of popular errors,
e.g., Mean Square Error (MSE), ``0-1'' loss, etc., can also be applied with only small
changes in the following analysis.
Let $\mathcal{D}(\hat{R}) = \sqrt{\frac{1}{mn}\sum_{i=1}^m\sum_{j=1}^n{(R_{i,j}-\hat{R}_{i,j})^2}}$
be the root mean square error of approximating $R$ with $\hat{R}$.
One of the most popular objectives of matrix approximation is to approximate a given matrix $R$
based on a set of observed entries $\Omega$
($\mathcal{D}_{\Omega}(\hat{R}) = \sqrt{\frac{1}{|\Omega|}\sum_{(i,j)\in\Omega}{(R_{i,j}-\hat{R}_{i,j})^2}}$).
Thus, the stability of approximating $R$ is defined as follows.

\begin{definition}[Stability w.r.t. Matrix Approximation]
\label{def:stability}
For any $R\in \mathbb{F}^{m\times n}$, choose a subset of entries $\Omega$
from $R$ uniformly. For a given $\epsilon>0$, we say that $\mathcal{D}_{\Omega}(\hat{R})$
is $\delta$-stable if the following holds:
\begin{equation}
\Pr[|\mathcal{D}(\hat{R}) - \mathcal{D}_{\Omega}(\hat{R})|<\epsilon]\ge 1-\delta.
\end{equation}
\end{definition}

Matrix approximation with stability guarantee has the property that the
generalization error is bounded. Minimizing the training error will have a high
probability of minimizing the test error.
The stability notion introduced in this work defines how stable an approximation is
in terms of the overall prediction error. It is  different from the Uniform Stability
definition~\cite{Bousquet01}, which defines the prediction stability on individual entries.
This new stability notion makes
it possible to measure the generalization performance between different approximations.
For instance, for any two different approximations of $R$ --- $\hat{R}_1$ and $\hat{R}_2$,
which are $\delta_1$-stable and $\delta_2$-stable, respectively, then
$\mathcal{D}_{\Omega}(\hat{R}_1)$ is more stable than $\mathcal{D}_{\Omega}(\hat{R}_2)$
if $\delta_1<\delta_2$. This implies that $\mathcal{D}_{\Omega}(\hat{R}_1)$ is close to
$\mathcal{D}(\hat{R})$ with higher probability than $\mathcal{D}_{\Omega}(\hat{R}_2)$,
i.e., solving the optimization problem defined by $\mathcal{D}_{\Omega}(\hat{R}_1)$
will lead to solutions that are more likely to have better generalization performance
than that of $\mathcal{D}_{\Omega}(\hat{R}_2)$. In summary, using the stability notion
introduced in this paper, we can compare the stability bounds of different matrix approximation problems
and define new matrix approximation problems which can yield
solutions with high stability, i.e., high generalization performance.

\subsection{Stability vs. Generalization Error}
\begin{figure}[htb!]
  \centerline{\includegraphics[width=0.42\textwidth]{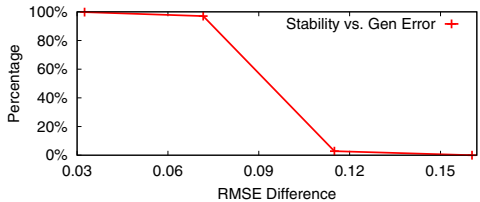}}
  \caption{Stability vs. generalization error with different rank $r$ on the MovieLens (1M) dataset.}
  \label{fig:motivation}
\end{figure}

Figure~\ref{fig:motivation} quantifies stability changes of matrix approximation method
with the generalization error when rank $r$ increases from 5 to 20. This experiment uses
RSVD~\cite{paterek2007improving}, a popular MA-based recommendation algorithm,
on the MovieLens (1M) dataset ({\scriptsize${\sim}$}$10^6$ ratings, 6,040 users, 3,706 items).
We choose $\epsilon$ in Definition~\ref{def:stability} as 0.0046 to
cover all error differences when $r=5$.
We compute $\Pr[|\mathcal{D}(\hat{R}) - \mathcal{D}_{\Omega}(\hat{R})|<\epsilon]$
with 500 different runs to measure stability (y-axis), and compute RMSE differences
between training and test data to measure generalization error (x-axis).

As shown in Figure~\ref{fig:motivation}, the generalization error increases when
rank $r$ increases, because matrix approximation models become more complex
and more biased on training data with higher ranks. In contrary,
the stability of RSVD decreases when $r$ increases. This indicates that
(1) stability decreases when generalization error increases, and
(2) RSVD cannot provide stable recommendation even when the rank is as low as 20.
This study demonstrates that existing matrix approximation methods suffer from low generalization
performance due to low algorithm stability. Therefore, it is important to develop
stable matrix approximation methods with good generalization performance.

\section{Stability of Matrix Approximation}
\label{sec:theory}
In this section, we analyze the stability of matrix approximation in two different
collaborative filtering tasks: 1) rating prediction and 2) top-N recommendation.
We formally prove our key observations that introducing properly selected subsets
into the loss functions can improve the stability of matrix approximation methods.

\subsection{Stability Analysis of Matrix Approximation in the Rating Prediction Task}
We first introduce the Hoeffding's Lemma, and then analyze the
stability properties of low-rank matrix approximation problems.
\begin{lemma}[Hoeffding's Lemma]
\label{lm:hoeffding}
Let $X$ be a real-valued random variable with zero mean and $\Pr\left(X\in \left[b,a\right]\right)=1$.
Then, for any $s\in \mathbb{R}$,
$\mathrm {E} \left[e^{sX}\right]\leq \exp \left({\tfrac {1}{8}}s^{2}(a-b)^{2}\right)$.
\end{lemma}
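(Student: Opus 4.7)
The plan is to reduce the lemma to a one-variable inequality about the log moment generating function of a Bernoulli-type bound, via the standard convexity-plus-Taylor argument. The first step is to exploit convexity of the map $x \mapsto e^{sx}$ on the interval $[b,a]$: any $x$ in that interval can be written as the convex combination $x = \frac{a-x}{a-b}\,b + \frac{x-b}{a-b}\,a$, so
\begin{equation*}
e^{sx} \le \frac{a-x}{a-b}\,e^{sb} + \frac{x-b}{a-b}\,e^{sa}.
\end{equation*}
Taking expectations and using $\mathrm{E}[X]=0$ kills the linear-in-$X$ terms, yielding
\begin{equation*}
\mathrm{E}[e^{sX}] \le \frac{a}{a-b}\,e^{sb} - \frac{b}{a-b}\,e^{sa}.
\end{equation*}
Note that the hypotheses force $b \le 0 \le a$, so the coefficients are nonnegative and can legitimately be read as the weights of a Bernoulli distribution.

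Next I would reparametrize: set $p = -b/(a-b) \in [0,1]$ (so $1-p = a/(a-b)$), let $u = s(a-b)$, and factor out $e^{sb} = e^{-pu}$. A short calculation rewrites the bound as $\mathrm{E}[e^{sX}] \le e^{\phi(u)}$ where
\begin{equation*}
\phi(u) = -pu + \log\!\bigl(1 - p + p\, e^{u}\bigr).
\end{equation*}
The whole lemma now reduces to proving $\phi(u) \le u^2/8$ for every $u \in \mathbb{R}$, which I would do by Taylor's theorem with remainder around $u=0$. One checks $\phi(0) = 0$ and $\phi'(0) = 0$ directly. For the second derivative, let $q(u) = p e^{u}/(1-p+p e^{u})$; then $\phi'(u) = q(u) - p$ and $\phi''(u) = q(u)\bigl(1-q(u)\bigr)$, since $q$ satisfies the logistic ODE. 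Because $q(u) \in (0,1)$, the AM--GM inequality gives $q(u)(1-q(u)) \le 1/4$.

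Combining these pieces, Taylor's theorem yields $\phi(u) = \tfrac{1}{2}\phi''(\xi)\, u^2 \le u^2/8$ for some intermediate $\xi$, and substituting $u = s(a-b)$ produces the claimed bound $\mathrm{E}[e^{sX}] \le \exp\bigl(\tfrac{1}{8} s^2 (a-b)^2\bigr)$. The main obstacle is really the single analytic step of bounding $\phi''$ uniformly by $1/4$; everything else is bookkeeping (convexity, linearity of expectation, and substitution). An alternative route would bypass the logistic reformulation by directly bounding the cumulant generating function via the identity $\phi''(u) = \mathrm{Var}(Y_u)$ where $Y_u$ is a Bernoulli-weighted random variable taking values in $\{b,a\}$, and then invoking Popoviciu's inequality $\mathrm{Var}(Y_u) \le (a-b)^2/4$; I mention this as a cleaner conceptual alternative, but would default to the direct Taylor argument for compactness.
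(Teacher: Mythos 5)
Your proof is correct. Note that the paper itself states Hoeffding's Lemma as a known classical result and supplies no proof of it at all, so there is nothing to compare against on the paper's side; what you have written is the standard textbook argument (convexity of $x\mapsto e^{sx}$ to reduce to the two-point distribution on $\{b,a\}$, the reparametrization $\phi(u)=-pu+\log(1-p+pe^{u})$ with $u=s(a-b)$, and Taylor's theorem with the uniform bound $\phi''=q(1-q)\le 1/4$), and every step checks out: the identity $\phi'(u)=q(u)-p$, the logistic relation $\phi''=q(1-q)$, and the final substitution all hold. The only cosmetic gap is the degenerate case $a=b$, where the convex-combination weights $\frac{a-x}{a-b}$ are undefined; there $X=0$ almost surely and the inequality is trivially an equality, so you should dispose of it in one line before dividing by $a-b$. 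Your remark that $\phi''(u)$ is the variance of the tilted two-point distribution, bounded by Popoviciu's inequality, is indeed the cleaner conceptual reading of the same computation.
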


Following the Uniform Stability~\cite{Bousquet01}, given a stable matrix approximation
algorithm, the approximation results remain stable if the change of the
training data set, i.e., the set of observed entries $\Omega$ from the original matrix
$R$,  is small.
For instance, we can remove a subset of easily predictable entries from $\Omega$ to obtain $\Omega'$.
It is desirable that the solution of minimizing both $\mathcal{D}_\Omega$ and $\mathcal{D}_{\Omega'}$ together
will be more stable than the solution of minimizing $\mathcal{D}_\Omega$ only.
The following Theorem~\ref{thm:stable_1} formally proves the statement.
\begin{theorem}
\label{thm:stable_1}
Let $\Omega$ ($|\Omega|>2$) be a set of observed entries in $R$.
Let $\omega\subset\Omega$ be a subset of observed entries, which satisfies that
$\forall (i,j)\in\omega$, $|R_{i,j}-\hat{R}_{i,j}|\le\mathcal{D}_{\Omega}(\hat{R})$.
Let $\Omega'=\Omega-\omega$, 
then for any $\epsilon>0$ and $1>\lambda_0,\lambda_1>0$ ($\lambda_0+\lambda_1=1$),
$\lambda_0\mathcal{D}_{\Omega}(\hat{R})+ \lambda_1\mathcal{D}_{\Omega'}(\hat{R})$ and
$\mathcal{D}_{\Omega}(\hat{R})$ are $\delta_1$-stable and $\delta_2$-stable, resp.,
then $\delta_1\le\delta_2$.
\end{theorem}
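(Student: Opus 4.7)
The plan is to treat the sampling of $\Omega$ as the only source of randomness and bound the Chernoff-style tail of the two estimators via Hoeffding's lemma applied to the (shifted) squared per-entry errors $e_{i,j}^{2} = (R_{i,j}-\hat R_{i,j})^{2}$. Let $M = \max_{i,j}|e_{i,j}|$, so that each $e_{i,j}^{2}\in[0,M^{2}]$. The quantity $\mathcal{D}_{\Omega}(\hat R)^{2}$ is then the empirical mean of $|\Omega|$ uniformly sampled entries of $\{e_{i,j}^{2}\}$, with true mean $\mathcal{D}(\hat R)^{2}$, so Lemma~\ref{lm:hoeffding} applied to the zero-mean shifts $e_{i,j}^{2}-\mathcal{D}(\hat R)^{2}$ produces
\[
\Pr\!\left[\bigl|\mathcal{D}_{\Omega}(\hat R)^{2}-\mathcal{D}(\hat R)^{2}\bigr|\ge \epsilon'\right]\;\le\;2\exp\!\left(-\tfrac{2|\Omega|\,\epsilon'^{2}}{M^{4}}\right).
\]
A Lipschitz (mean-value) conversion on $\sqrt{\cdot}$ then bounds $|\mathcal{D}-\mathcal{D}_{\Omega}|$ by $\epsilon$, yielding a closed form for $\delta_{2}$.

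For the convex combination I would first rewrite
\[
\lambda_0\mathcal{D}_{\Omega}(\hat R)+\lambda_1\mathcal{D}_{\Omega'}(\hat R)-\mathcal{D}(\hat R)
= \lambda_0\bigl(\mathcal{D}_{\Omega}-\mathcal{D}\bigr)+\lambda_1\bigl(\mathcal{D}_{\Omega'}-\mathcal{D}\bigr),
\]
and, working again at the squared level, decompose the $|\Omega|$-term sum defining $\mathcal{D}_{\Omega}^{2}$ as a sum of two sub-sums: one over $\omega$ and one over $\Omega'$. The hypothesis $|e_{i,j}|\le \mathcal{D}_{\Omega}(\hat R)$ for $(i,j)\in\omega$ means that each term in the $\omega$-sub-sum has range $[0,\mathcal{D}_{\Omega}^{2}]$ rather than $[0,M^{2}]$. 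Plugging the smaller range into Hoeffding's lemma for the $\omega$-block and the usual range $M^{2}$ for the $\Omega'$-block, and then taking the $\lambda_{0},\lambda_{1}$-weighted combination, gives a moment-generating-function bound with a strictly smaller quadratic exponent than the one obtained for $\mathcal{D}_{\Omega}$ alone whenever $\lambda_{1}>0$. Optimizing the Chernoff parameter then yields $\delta_{1}\le\delta_{2}$.

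The main obstacle is that $\omega$ is data-dependent: it is defined through $\mathcal{D}_{\Omega}(\hat R)$, which is itself a function of the random sample $\Omega$, so the summands appearing in $\mathcal{D}_{\Omega'}$ are not genuinely i.i.d.\ in the unconditional law. I would handle this by conditioning on the realization of $\Omega$ and verifying that the range-reduction step above applies pointwise for every realization satisfying the theorem's assumption; after conditioning the Hoeffding bound is valid term-wise, and taking expectations over $\Omega$ preserves the inequality because the bound no longer depends on the realized $\omega$. A secondary subtlety is that the stability definition is stated at the scale of $\mathcal{D}$, not $\mathcal{D}^{2}$; I would absorb the $\sqrt{\cdot}$-Lipschitz constant consistently on both sides of the comparison so that the same $\epsilon$ appears in $\delta_{1}$ and $\delta_{2}$, allowing a direct term-by-term comparison of the two Chernoff tails and concluding $\delta_{1}\le\delta_{2}$.
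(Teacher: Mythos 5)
Your strategy---per-entry Hoeffding applied to the squared errors $e_{i,j}^2=(R_{i,j}-\hat R_{i,j})^2$, followed by a Lipschitz conversion of $\sqrt{\cdot}$---is genuinely different from the paper's argument, which never descends to individual entries: the paper treats the aggregate deviation $\mathcal{D}(\hat R)-\mathcal{D}_\Omega(\hat R)$ as a single bounded zero-mean variable with range $[-a,a]$, applies Lemma~\ref{lm:hoeffding} plus Markov once to each estimator, and reduces the entire comparison to showing $a'\le a$. That in turn follows from the elementary observation that removing below-average entries raises the RMSE, i.e.\ $\mathcal{D}_{\Omega'}(\hat R)\ge\mathcal{D}_\Omega(\hat R)$, so the supremum of the combined deviation can only shrink. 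Your finer-grained route, however, has gaps that I believe are fatal as stated.

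First, the comparison step points the wrong way. The block decomposition of $\mathcal{D}_\Omega^2$ into an $\omega$-sum (per-term range $[0,\mathcal{D}_\Omega^2]$) and an $\Omega'$-sum (per-term range $[0,M^2]$) is available to \emph{both} estimators, so it cannot by itself separate them. What actually distinguishes $\lambda_0\mathcal{D}_\Omega+\lambda_1\mathcal{D}_{\Omega'}$ from $\mathcal{D}_\Omega$ is that the former puts strictly more weight on the entries of $\Omega'$---precisely the entries with the \emph{large} range---and less relative weight on the small-range entries of $\omega$. In a sum-of-bounded-terms Hoeffding bound the exponent is governed by $\sum_i c_i^2 r_i^2$, so up-weighting the large-range terms makes the tail bound larger, not smaller; the claimed ``strictly smaller quadratic exponent whenever $\lambda_1>0$'' does not follow, and carried out honestly this calculation would tend to give $\delta_1\ge\delta_2$. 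Second, the sampling premise fails for $\mathcal{D}_{\Omega'}$ in a way that conditioning cannot repair: $\Omega'$ consists of the entries with above-average realized error, so its summands form a biased subsample whose squared mean systematically exceeds $\mathcal{D}^2(\hat R)$, and $E[\mathcal{D}(\hat R)-\mathcal{D}_{\Omega'}(\hat R)]\ne 0$. Conditioning on the realization of $\Omega$ eliminates all the randomness (every quantity in the theorem then becomes a constant), so there is nothing left to concentrate, while unconditionally the $\Omega'$-block is not an unbiased sample. A further technical obstacle is that $(\lambda_0\mathcal{D}_\Omega+\lambda_1\mathcal{D}_{\Omega'})^2$ is not a weighted sum of per-entry squared errors, so the ``work at the squared level, then take square roots'' conversion does not transfer to the convex combination without a separate argument for the sum of two correlated square roots. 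If you want to salvage a proof, you still need the paper's key inequality $\mathcal{D}_{\Omega'}(\hat R)\ge\mathcal{D}_\Omega(\hat R)$, and you need to use it to shrink the range of the \emph{aggregate} deviation, as the paper does, rather than the ranges of individual summands.
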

\begin{proof}
Let's assume that $\mathcal{D}(\hat{R}) - \mathcal{D}_{\Omega}(\hat{R})\in[-a, a]$
($a=\sup\{\mathcal{D}(\hat{R}) - \mathcal{D}_{\Omega}(\hat{R})\}$) and
$\mathcal{D}(\hat{R}) - (\lambda_0\mathcal{D}_{\Omega}(\hat{R}) + \lambda_1\mathcal{D}_{\Omega'}(\hat{R}))\in[-a',a']$
($a'=\sup\{\mathcal{D}(\hat{R}) - (\lambda_0\mathcal{D}_{\Omega}(\hat{R}) + \lambda_1\mathcal{D}_{\Omega'}(\hat{R}))\}$)
are two random variables with $0$ mean.

Based on Markov's inequality, for any $t>0$, we have
\begin{equation}
\Pr[\mathcal{D}(\hat{R}) - \mathcal{D}_{\Omega}(\hat{R})\ge\epsilon] \le
\frac{E[\exp{(t(\mathcal{D}(\hat{R}) - \mathcal{D}_{\Omega}(\hat{R})))}]}{\exp{(t\epsilon)}}.\nonumber
\end{equation}
Then, based on Lemma~\ref{lm:hoeffding}, we have
$E[\exp{(t(\mathcal{D}(\hat{R}) - \mathcal{D}_{\Omega}(\hat{R})))}]\le \exp{(\frac{1}{2}t^2a^2)}$, i.e.,
$\Pr[\mathcal{D}(\hat{R}) - \mathcal{D}_{\Omega}(\hat{R})\ge\epsilon] \le
\frac{\exp{(\frac{1}{2}t^2a^2)}}{\exp{(t\epsilon)}}$.
And similarly, we have
$\Pr[\mathcal{D}(\hat{R}) - \mathcal{D}_{\Omega}(\hat{R})\le -\epsilon] \le
\frac{\exp{(\frac{1}{2}t^2a^2)}}{\exp{(t\epsilon)}}$.
Combining the two equations above, we have
$\Pr[|\mathcal{D}(\hat{R}) - \mathcal{D}_{\Omega}(\hat{R})|\ge\epsilon] \le
\frac{2\exp{(\frac{1}{2}t^2a^2)}}{\exp{(t\epsilon)}}$, i.e.,
\begin{equation}
\Pr[|\mathcal{D}(\hat{R}) - \mathcal{D}_{\Omega}(\hat{R})|<\epsilon]
\ge 1 - \frac{2\exp{(\frac{1}{2}t^2a^2)}}{\exp{(t\epsilon)}}.
\end{equation}
Similarly, we have
\begin{equation}
\Pr[|\mathcal{D}(\hat{R}) - (\lambda_0\mathcal{D}_{\Omega}(\hat{R}) + \lambda_1\mathcal{D}_{\Omega'}(\hat{R}))|<\epsilon]
\ge 1 - \frac{2\exp{(\frac{1}{2}t^2a'^2)}}{\exp{(t\epsilon)}}.
\end{equation}
We can compare $a'$ with $a$ as follows:
\begin{align}
a' &= \sup\{\mathcal{D}(\hat{R}) - \mathcal{D}_{\Omega}(\hat{R}) +
\lambda_1(\mathcal{D}_{\Omega}(\hat{R})-\mathcal{D}_{\Omega'}(\hat{R}))\}\nonumber\\
&= \sup\{\mathcal{D}(\hat{R}) - \mathcal{D}_{\Omega}(\hat{R})\} +
\lambda_1\sup\{\mathcal{D}_{\Omega}(\hat{R})-\mathcal{D}_{\Omega'}(\hat{R})\}\nonumber\\
&= a + \lambda_1\sup\{\mathcal{D}_{\Omega}(\hat{R})-\mathcal{D}_{\Omega'}(\hat{R})\}.\nonumber
\end{align}
Since $\forall (i,j)\in\omega$, $|R_{i,j}-\hat{R}_{i,j}|\le\mathcal{D}_{\Omega}(\hat{R})$, we have
$1/|\omega|\sum_{(i,j)\in\omega}(R_{i,j}-\hat{R}_{i,j})^2\le\mathcal{D}_{\Omega}^2(\hat{R})$, i.e.,
$\mathcal{D}_{\omega}(\hat{R})\le\mathcal{D}_{\Omega}(\hat{R})$. Then, since $\Omega=\omega\cup\Omega'$,
we have $\mathcal{D}_{\Omega'}(\hat{R})\ge\mathcal{D}_{\Omega}(\hat{R})$.
This means that $\sup\{\mathcal{D}_{\Omega}(R)-\mathcal{D}_{\Omega'}(R)\}\le0$.
Thus, we have $a'\le a$. Therefore, 
$\frac{2\exp{(\frac{1}{2}t^2a'^2)}}{\exp{(t\epsilon)}} \le \frac{2\exp{(\frac{1}{2}t^2a^2)}}{\exp{(t\epsilon)}}$,
i.e., $\delta_1\le\delta_2$.
\end{proof}

{\bf Remark.}
Theorem~\ref{thm:stable_1} above indicates that, if we remove a subset of entries that
are easier to predict than average from $\Omega$ to form $\Omega'$, then
$\lambda_0\mathcal{D}_{\Omega}(\hat{R})+ \lambda_1\mathcal{D}_{\Omega'}(\hat{R})$
has a higher probability of being close to $\mathcal{D}(\hat{R})$ than $\mathcal{D}_{\Omega}(\hat{R})$.
Therefore, minimizing $\lambda_0\mathcal{D}_{\Omega}(\hat{R})+\lambda_1\mathcal{D}_{\Omega'}(\hat{R})$
will lead to solutions that have better generalization performance than minimizing $\mathcal{D}_{\Omega}(\hat{R})$.
It should be noted that the condition that $\forall (i,j)\in\omega$,
$|R_{i,j}-\hat{R}_{i,j}|\le\mathcal{D}_{\Omega}(\hat{R})$ is not necessary.
The conclusion will be the same if $\mathcal{D}_{\omega}(\hat{R})\le\mathcal{D}_{\Omega}(\hat{R})$.
The following Proposition~\ref{prop:stable_1} formally proves this.
\begin{proposition}
Let $\Omega$ ($|\Omega|>2$) be a set of observed entries in $R$.
Let $\omega\subset\Omega$ be a subset of observed entries, which satisfies that
$D_{\omega}(\hat{R})\le D_{\Omega}(\hat{R})$. Let $\Omega'=\Omega-\omega$,
then for any $\epsilon>0$ and $1>\lambda_0,\lambda_1>0$ ($\lambda_0+\lambda_1=1$),
$\lambda_0\mathcal{D}_{\Omega}(\hat{R})+ \lambda_1\mathcal{D}_{\Omega'}(\hat{R})$ and
$\mathcal{D}_{\Omega}(\hat{R})$ are $\delta_1$-stable and $\delta_2$-stable, resp.,
then $\delta_1\le\delta_2$.
\label{prop:stable_1}
\end{proposition}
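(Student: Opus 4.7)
The plan is to recycle the proof of Theorem~\ref{thm:stable_1} almost verbatim, after observing that the pointwise hypothesis $|R_{i,j}-\hat{R}_{i,j}|\le \mathcal{D}_{\Omega}(\hat{R})$ for every $(i,j)\in\omega$ was invoked only once in that argument, namely to deduce the averaged inequality $\mathcal{D}_{\omega}(\hat{R})\le\mathcal{D}_{\Omega}(\hat{R})$. Since Proposition~\ref{prop:stable_1} takes this averaged inequality as its hypothesis, every step upstream of that deduction is unaffected and every step downstream goes through unchanged.

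Concretely, I would first introduce the two zero-mean random variables $\mathcal{D}(\hat{R})-\mathcal{D}_{\Omega}(\hat{R})\in[-a,a]$ and $\mathcal{D}(\hat{R})-(\lambda_0\mathcal{D}_{\Omega}(\hat{R})+\lambda_1\mathcal{D}_{\Omega'}(\hat{R}))\in[-a',a']$ exactly as in Theorem~\ref{thm:stable_1}, apply Markov's inequality for $t>0$, and then bound the resulting moment generating functions via Hoeffding's Lemma (Lemma~\ref{lm:hoeffding}). This yields
\begin{equation*}
\Pr[|\mathcal{D}(\hat{R}) - \mathcal{D}_{\Omega}(\hat{R})|<\epsilon] \ge 1 - \frac{2\exp(\tfrac{1}{2}t^2 a^2)}{\exp(t\epsilon)},
\end{equation*}
and the analogous statement with $a'$ in place of $a$ for the $\lambda_0,\lambda_1$-weighted combination. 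The whole claim $\delta_1\le\delta_2$ thereby reduces to verifying the single inequality $a'\le a$.

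For the comparison of $a'$ with $a$, I would reuse the decomposition from Theorem~\ref{thm:stable_1}, namely $a' = a + \lambda_1\sup\{\mathcal{D}_{\Omega}(\hat{R})-\mathcal{D}_{\Omega'}(\hat{R})\}$, so it suffices to show $\mathcal{D}_{\Omega'}(\hat{R})\ge\mathcal{D}_{\Omega}(\hat{R})$. Since $\Omega=\omega\cup\Omega'$ is a disjoint union, the squared RMSE splits as the convex combination
\begin{equation*}
|\Omega|\,\mathcal{D}_{\Omega}^{2}(\hat{R}) = |\omega|\,\mathcal{D}_{\omega}^{2}(\hat{R}) + |\Omega'|\,\mathcal{D}_{\Omega'}^{2}(\hat{R}),
\end{equation*}
and the hypothesis $\mathcal{D}_{\omega}(\hat{R})\le \mathcal{D}_{\Omega}(\hat{R})$ forces the complementary piece to satisfy $\mathcal{D}_{\Omega'}(\hat{R})\ge \mathcal{D}_{\Omega}(\hat{R})$, which delivers $a'\le a$ and completes the argument.

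There is no real obstacle here, since the proof of Theorem~\ref{thm:stable_1} was already organized so that the pointwise bound served only as a sufficient condition for the averaged one. The only place to be careful is the arithmetic identity above for the squared RMSE over the disjoint union, which must be stated in terms of $\mathcal{D}^{2}$ (not $\mathcal{D}$) so that the weighted-average structure is exact; once that is written down correctly, the implication from $\mathcal{D}_{\omega}\le\mathcal{D}_{\Omega}$ to $\mathcal{D}_{\Omega'}\ge\mathcal{D}_{\Omega}$ is immediate.
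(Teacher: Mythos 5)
Your proposal is correct and matches the paper's intent exactly: the paper omits this proof, stating only that it is ``similar to that of Theorem~\ref{thm:stable_1},'' and your observation that the pointwise hypothesis enters that argument solely to establish $\mathcal{D}_{\omega}(\hat{R})\le\mathcal{D}_{\Omega}(\hat{R})$ is precisely why the omission is justified. Your explicit disjoint-union identity $|\Omega|\,\mathcal{D}_{\Omega}^{2}(\hat{R}) = |\omega|\,\mathcal{D}_{\omega}^{2}(\hat{R}) + |\Omega'|\,\mathcal{D}_{\Omega'}^{2}(\hat{R})$ correctly fills in the one step the paper leaves implicit.
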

\begin{proof}
This proof is omitted as it is similar to that of Theorem~\ref{thm:stable_1}.
\end{proof}
However, Theorem~\ref{thm:stable_1} and Proposition~\ref{prop:stable_1} only
prove that it is beneficial to remove easily predictable entries from $\Omega$
to obtain $\Omega'$, but does not show how many entries should be removed from $\Omega$.
The following Theorem~\ref{thm:stable_2} shows that removing more entries that satisfy
$|R_{i,j}-\hat{R}_{i,j}|\le\mathcal{D}_{\Omega}(\hat{R})$ can yield better $\Omega'$.

\begin{theorem}
\label{thm:stable_2}
Let $\Omega$ ($|\Omega|>2$) be a set of observed entries in $R$.
Let $\omega_2\subset\omega_1\subset\Omega$, and $\omega_1$ and $\omega_2$ satisfy that
$\forall (i,j)\in\omega_1(\omega_2)$, $|R_{i,j}-\hat{R}_{i,j}|\le\mathcal{D}_{\Omega}(\hat{R})$.
Let $\Omega_1 = \Omega - \omega_1$ and $\Omega_2 = \Omega - \omega_2$,
then for any $\epsilon>0$ and $1>\lambda_0,\lambda_1>0$ ($\lambda_0+\lambda_1=1$),
$\lambda_0\mathcal{D}_{\Omega}(\hat{R})+ \lambda_1\mathcal{D}_{\Omega_1}(\hat{R})$ and
$\lambda_0\mathcal{D}_{\Omega}(\hat{R})+ \lambda_1\mathcal{D}_{\Omega_2}(\hat{R})$ are
$\delta_1$-stable and $\delta_2$-stable, resp., then $\delta_1\le\delta_2$.
\end{theorem}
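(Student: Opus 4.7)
The plan is to recycle the Hoeffding-plus-Markov template used in the proof of Theorem~\ref{thm:stable_1} twice, once for each of the two mixed losses, and then compare the two resulting exponential bounds. Concretely, set
\[
a_k' = \sup\bigl\{\mathcal{D}(\hat{R}) - (\lambda_0\mathcal{D}_{\Omega}(\hat{R}) + \lambda_1\mathcal{D}_{\Omega_k}(\hat{R}))\bigr\}, \qquad k=1,2,
\]
and note that each of the centered random variables inside the sup lies in $[-a_k', a_k']$ and has zero mean by construction. Applying Hoeffding's Lemma together with Markov's inequality (separately for the positive and negative tails, as in Theorem~\ref{thm:stable_1}) yields
\[
\Pr\bigl[\,|\mathcal{D}(\hat{R}) - (\lambda_0\mathcal{D}_{\Omega}(\hat{R}) + \lambda_1\mathcal{D}_{\Omega_k}(\hat{R}))| < \epsilon\bigr] \;\ge\; 1 - \frac{2\exp\bigl(\tfrac{1}{2}t^{2}a_k'^{\,2}\bigr)}{\exp(t\epsilon)}.
\]
So the whole task reduces to showing $a_1' \le a_2'$, since then the right-hand bound for $k=1$ dominates that for $k=2$ and we conclude $\delta_1 \le \delta_2$.

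To compare $a_1'$ and $a_2'$, I would rewrite each as
\[
a_k' = \sup\bigl\{\mathcal{D}(\hat{R}) - \mathcal{D}_{\Omega}(\hat{R})\bigr\} + \lambda_1\sup\bigl\{\mathcal{D}_{\Omega}(\hat{R}) - \mathcal{D}_{\Omega_k}(\hat{R})\bigr\} = a + \lambda_1\sup\bigl\{\mathcal{D}_{\Omega}(\hat{R}) - \mathcal{D}_{\Omega_k}(\hat{R})\bigr\},
\]
exactly as in the proof of Theorem~\ref{thm:stable_1}. The crux is then the monotonicity claim $\mathcal{D}_{\Omega_1}(\hat{R}) \ge \mathcal{D}_{\Omega_2}(\hat{R}) \ge \mathcal{D}_{\Omega}(\hat{R})$. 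The second inequality is the same observation used in Theorem~\ref{thm:stable_1}: every entry in $\omega_2$ satisfies $|R_{i,j}-\hat{R}_{i,j}| \le \mathcal{D}_{\Omega}(\hat{R})$, so $\mathcal{D}_{\omega_2}(\hat{R}) \le \mathcal{D}_{\Omega}(\hat{R})$, and since $\Omega$ is the disjoint union of $\omega_2$ and $\Omega_2$, averaging a low-error block with the complement must raise the error on the complement. The first inequality is the new ingredient: $\Omega_1 = \Omega_2 \setminus (\omega_1 \setminus \omega_2)$, and every element of $\omega_1 \setminus \omega_2$ is also an ``easy'' entry with squared error at most $\mathcal{D}_{\Omega}^{2}(\hat{R}) \le \mathcal{D}_{\Omega_2}^{2}(\hat{R})$; deleting such a below-average block again raises the RMSE, giving $\mathcal{D}_{\Omega_1}(\hat{R}) \ge \mathcal{D}_{\Omega_2}(\hat{R})$.

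Combining these inequalities, $\mathcal{D}_{\Omega}(\hat{R}) - \mathcal{D}_{\Omega_1}(\hat{R}) \le \mathcal{D}_{\Omega}(\hat{R}) - \mathcal{D}_{\Omega_2}(\hat{R}) \le 0$, so passing to suprema preserves the order and $a_1' \le a_2'$, which plugged into the Hoeffding bound produces $\delta_1 \le \delta_2$.

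The main obstacle I expect is the monotonicity step rather than the concentration machinery, which is just a mechanical replay of Theorem~\ref{thm:stable_1}. The subtle point is that although the entries added back when moving from $\Omega_1$ to $\Omega_2$ are characterized by the bound $|R_{i,j}-\hat{R}_{i,j}|\le \mathcal{D}_{\Omega}(\hat{R})$ involving the \emph{original} set $\Omega$, what we actually need is that they are ``below average'' relative to $\Omega_2$. The chain $\mathcal{D}_{\omega_1\setminus\omega_2}(\hat{R}) \le \mathcal{D}_{\Omega}(\hat{R}) \le \mathcal{D}_{\Omega_2}(\hat{R})$ (the second step coming from the Theorem~\ref{thm:stable_1} argument applied to $\omega_2$) is what makes this comparison go through, and it is worth stating it explicitly so the proof does not appear to skip a reference point.
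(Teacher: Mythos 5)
Your proposal is correct and follows essentially the same route as the paper: Hoeffding's Lemma plus Markov's inequality applied to each mixed loss, followed by a comparison of $a_1$ and $a_2$ via the monotonicity $\mathcal{D}_{\Omega_1}(\hat{R})\ge\mathcal{D}_{\Omega_2}(\hat{R})\ge\mathcal{D}_{\Omega}(\hat{R})$. In fact your treatment of the key monotonicity step --- justifying $\mathcal{D}_{\Omega_1}(\hat{R})\ge\mathcal{D}_{\Omega_2}(\hat{R})$ through the chain $\mathcal{D}_{\omega_1\setminus\omega_2}(\hat{R})\le\mathcal{D}_{\Omega}(\hat{R})\le\mathcal{D}_{\Omega_2}(\hat{R})$ --- is more careful than the paper's, which asserts the inequality with its direction reversed (evidently a sign slip that cancels against a second one in the decomposition of $a_1$) and omits the intermediate reference point you rightly flag as necessary.
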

\begin{proof}
Similar to Theorem~\ref{thm:stable_1}, let's assume that
$\mathcal{D}(\hat{R}) - (\lambda_0\mathcal{D}_{\Omega}(\hat{R}) + \lambda_1\mathcal{D}_{\Omega_1}(\hat{R}))\in[-a_1,a_1]$
($a_1=\sup\{\mathcal{D}(\hat{R}) - (\lambda_0\mathcal{D}_{\Omega}(\hat{R}) + \lambda_1\mathcal{D}_{\Omega_1}(\hat{R}))\}$) and
$\mathcal{D}(\hat{R}) - (\lambda_0\mathcal{D}_{\Omega}(\hat{R}) + \lambda_1\mathcal{D}_{\Omega_2}(\hat{R}))\in[-a_2,a_2]$
($a_2=\sup\{\mathcal{D}(\hat{R}) - (\lambda_0\mathcal{D}_{\Omega}(\hat{R}) + \lambda_1\mathcal{D}_{\Omega_2}(\hat{R}))\}$)
are two random variables with 0 mean.

Applying Lemma~\ref{lm:hoeffding} and the Markov's inequality, we have
\begin{eqnarray}
\Pr[|\mathcal{D}(\hat{R}) - (\lambda_0\mathcal{D}_{\Omega}(\hat{R}) + \lambda_1\mathcal{D}_{\Omega_1}(\hat{R}))|<\epsilon]
&\ge& 1 - \frac{2\exp{(\frac{1}{2}t^2a_1^2)}}{\exp{(t\epsilon)}} \nonumber \\
\Pr[|\mathcal{D}(\hat{R}) - (\lambda_0\mathcal{D}_{\Omega}(\hat{R}) + \lambda_1\mathcal{D}_{\Omega_2}(\hat{R}))|<\epsilon]
&\ge& 1 - \frac{2\exp{(\frac{1}{2}t^2a_2^2)}}{\exp{(t\epsilon)}}. \nonumber
\end{eqnarray}
Since $\forall (i,j)\in\omega_1(\omega_2)$, $|R_{i,j}-\hat{R}_{i,j}|\le\mathcal{D}_{\Omega}(\hat{R})$
and $\omega_2\subset\omega_1$, we have $\mathcal{D}_{\Omega_1}(\hat{R})\le \mathcal{D}_{\Omega_2}(\hat{R})$.
Thus, we have $\sup\{\mathcal{D}_{\Omega_1}(\hat{R})-\mathcal{D}_{\Omega_2}(\hat{R})\}\le0$.
Since $a_1 = a_2 + \lambda_1\sup\{\mathcal{D}_{\Omega_1}(\hat{R})-\mathcal{D}_{\Omega_2}(\hat{R})\}$,
we have $a_1\le a_2$. Then, similar to Theorem~\ref{thm:stable_1}, we can conclude that
$\delta_1\le\delta_2$.
\end{proof}

{\bf Remark.}
From Theorem~\ref{thm:stable_2}, we know that removing more entries that are
easy to predict will yield more stable matrix approximation. Therefore, it is
desirable to choose $\Omega'$ as the whole set of entries which are harder to
predict than average, i.e., the whole set of entries satisfying
$\forall (i,j)\in\Omega'$, $|R_{i,j}-\hat{R}_{i,j}|\ge\mathcal{D}_{\Omega}(\hat{R})$.

Theorem~\ref{thm:stable_1} and~\ref{thm:stable_2} only consider choosing one $\Omega'$
to find stable matrix approximations. The next question is, is it possible to choose
more than one $\Omega'$ that satisfy the above condition, and yield
more stable solutions by minimizing them all together.
The following Theorem~\ref{thm:stable_3} shows that incorporating $K$ such entry sets ($\Omega'$)
will be more stable than incorporating any $K-1$ out of the $K$ entry sets.

\begin{theorem}
\label{thm:stable_3}
Let $\Omega$ ($|\Omega|>2$) be a set of observed entries in $R$.
$\omega_1,...,\omega_K \subset \Omega$ ($K>1$) satisfy that
$\forall (i,j)\in\omega_k$ ($1\le k\le K$), $|R_{i,j}-\hat{R}_{i,j}|\le\mathcal{D}_{\Omega}(\hat{R})$.
Let $\Omega_k = \Omega -\omega_k$ for all $1\le k\le K$. Then, for any $\epsilon>0$
and $1>\lambda_0,\lambda_1,...,\lambda_K>0$ ($\sum_{i=0}^K{\lambda_i}=1$),
$\lambda_0\mathcal{D}_{\Omega}(\hat{R})+ \sum_{k\in[1,K]}{\lambda_k\mathcal{D}_{\Omega_k}(\hat{R})}$
and $(\lambda_0+\lambda_K)\mathcal{D}_{\Omega}(\hat{R})
+ \sum_{k\in[1,K-1]}{\lambda_k\mathcal{D}_{\Omega_k}(\hat{R})}$ are
$\delta_1$-stable and $\delta_2$-stable, resp., then $\delta_1\le\delta_2$.
\end{theorem}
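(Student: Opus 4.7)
My plan is to reduce the comparison between
$A := \lambda_0 \mathcal{D}_{\Omega}(\hat{R}) + \sum_{k=1}^{K} \lambda_k \mathcal{D}_{\Omega_k}(\hat{R})$ and
$B := (\lambda_0+\lambda_K) \mathcal{D}_{\Omega}(\hat{R}) + \sum_{k=1}^{K-1} \lambda_k \mathcal{D}_{\Omega_k}(\hat{R})$
to the two-term setting already handled in Theorem~\ref{thm:stable_1}. The first thing I would do is a simple algebraic cancellation: all terms indexed by $k\in\{1,\dots,K-1\}$ appear identically in $A$ and $B$, so that
\begin{equation*}
A - B \;=\; \lambda_K\bigl(\mathcal{D}_{\Omega_K}(\hat{R}) - \mathcal{D}_{\Omega}(\hat{R})\bigr).
\end{equation*}
Hence only the $K$-th subset matters, and the asymmetry between $A$ and $B$ is a rescaled version of the one analyzed in Theorem~\ref{thm:stable_1} with $\omega = \omega_K$.

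Following the template of Theorems~\ref{thm:stable_1} and~\ref{thm:stable_2}, I would set $a_1 = \sup\{\mathcal{D}(\hat{R}) - A\}$ and $a_2 = \sup\{\mathcal{D}(\hat{R}) - B\}$, regard the two residuals as mean-zero random variables on $[-a_1,a_1]$ and $[-a_2,a_2]$, and apply Markov's inequality combined with Hoeffding's Lemma (Lemma~\ref{lm:hoeffding}) on both sides to obtain
\begin{align*}
\Pr\bigl[|\mathcal{D}(\hat{R}) - A| < \epsilon\bigr] &\ge 1 - \frac{2\exp(\tfrac{1}{2}t^2 a_1^2)}{\exp(t\epsilon)}, \\
\Pr\bigl[|\mathcal{D}(\hat{R}) - B| < \epsilon\bigr] &\ge 1 - \frac{2\exp(\tfrac{1}{2}t^2 a_2^2)}{\exp(t\epsilon)}.
\end{align*}
Since $\exp(\tfrac{1}{2}t^2 x^2)$ is monotone in $x \ge 0$, the comparison between $\delta_1$ and $\delta_2$ reduces to showing $a_1 \le a_2$.

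The key step is then to write $\mathcal{D}(\hat{R}) - A = \bigl(\mathcal{D}(\hat{R}) - B\bigr) - \lambda_K\bigl(\mathcal{D}_{\Omega_K}(\hat{R}) - \mathcal{D}_{\Omega}(\hat{R})\bigr)$ and take suprema exactly as in the proof of Theorem~\ref{thm:stable_1}, yielding $a_1 = a_2 + \lambda_K \sup\{\mathcal{D}_{\Omega}(\hat{R}) - \mathcal{D}_{\Omega_K}(\hat{R})\}$. Invoking the hypothesis that every $(i,j)\in\omega_K$ satisfies $|R_{i,j}-\hat{R}_{i,j}| \le \mathcal{D}_{\Omega}(\hat{R})$, we get $\mathcal{D}_{\omega_K}(\hat{R}) \le \mathcal{D}_{\Omega}(\hat{R})$, and since $\Omega = \omega_K \cup \Omega_K$ this forces $\mathcal{D}_{\Omega_K}(\hat{R}) \ge \mathcal{D}_{\Omega}(\hat{R})$. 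Consequently $\sup\{\mathcal{D}_{\Omega} - \mathcal{D}_{\Omega_K}\} \le 0$, which gives $a_1 \le a_2$ and hence $\delta_1 \le \delta_2$.

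I do not expect any genuine obstacle in this proof: the cancellation of the first $K-1$ terms makes the statement a direct specialization of the two-term comparison, and only the easily-predictable hypothesis on $\omega_K$ is actually used in the step from $B$ to $A$ (the analogous hypotheses on $\omega_1,\dots,\omega_{K-1}$ do not enter this argument, although they make the result iteratively chainable). The only point requiring mild care is the same one present in the proof of Theorem~\ref{thm:stable_1}, namely the use of $\sup\{X+Y\} = \sup X + \sup Y$ when decomposing the residual; I would simply mirror the authors' earlier treatment rather than reprove it.
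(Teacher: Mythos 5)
Your proposal is correct and follows essentially the same route as the paper's own proof: both isolate the $K$-th term, set $a' = a + \lambda_K\sup\{\mathcal{D}_{\Omega}(\hat{R})-\mathcal{D}_{\Omega_K}(\hat{R})\}$ (your $a_1 = a_2 + \lambda_K\sup\{\cdots\}$), and use the easily-predictable hypothesis on $\omega_K$ to conclude $\mathcal{D}_{\Omega_K}(\hat{R})\ge\mathcal{D}_{\Omega}(\hat{R})$ and hence $a_1\le a_2$. You even flag the same delicate step (additivity of suprema) that the paper itself relies on without further justification, so there is nothing to add.
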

\begin{proof}
Assume that $\mathcal{D}(\hat{R}) - ((\lambda_0+\lambda_K)\mathcal{D}_{\Omega}(\hat{R})
+ \sum_{k\in[1,K-1]}{\lambda_k\mathcal{D}_{\Omega_k}(\hat{R})})\in[-a, a]$
($a=\sup\{\mathcal{D}(\hat{R}) -
((\lambda_0+\lambda_K)\mathcal{D}_{\Omega}(\hat{R})
+ \sum_{k\in[1,K-1]}{\lambda_k\mathcal{D}_{\Omega_k}(\hat{R})})\}$ and
$\mathcal{D}(\hat{R}) -
(\lambda_0\mathcal{D}_{\Omega}(\hat{R})+ \sum_{k\in[1,K]}{\lambda_k\mathcal{D}_{\Omega_k}(\hat{R})})\in[-a',a']$
($a'=\sup\{\mathcal{D}(\hat{R}) -
(\lambda_0\mathcal{D}_{\Omega}(\hat{R})+ \sum_{k\in[1,K]}{\lambda_k\mathcal{D}_{\Omega_k}(\hat{R})})\}$
are two random variables with $0$ mean.

Applying Lemma~\ref{lm:hoeffding} and the Markov's inequality, we have
\begin{eqnarray}
\Pr[|\mathcal{D}(\hat{R}) -
(\lambda_0\mathcal{D}_{\Omega}(\hat{R})+ \sum_{k\in[1,K]}{\lambda_k\mathcal{D}_{\Omega_k}(\hat{R})})|<\epsilon]
&\ge& 1 - \frac{2\exp{(\frac{1}{2}t^2a'^2)}}{\exp{(t\epsilon)}} \nonumber \\
\Pr[|\mathcal{D}(\hat{R}) - ((\lambda_0+\lambda_K)\mathcal{D}_{\Omega}(\hat{R})
+ \sum_{k\in[1,K-1]}{\lambda_k\mathcal{D}_{\Omega_k}(\hat{R})})|<\epsilon]
&\ge& 1 - \frac{2\exp{(\frac{1}{2}t^2a^2)}}{\exp{(t\epsilon)}}. \nonumber
\end{eqnarray}
Similar as the proofs above, we have $\mathcal{D}_{\Omega_K}(\hat{R})\ge\mathcal{D}_\Omega(\hat{R})$,
which means $\sup\{\mathcal{D}_{\Omega}(\hat{R})-\mathcal{D}_{\Omega_K}(\hat{R})\}\le0$.
Since $a' = a + \lambda_K\sup\{\mathcal{D}_{\Omega}(\hat{R})-\mathcal{D}_{\Omega_K}(\hat{R})\}$,
we know that $a'\le a$.
Thus, we can conclude that $\frac{2\exp{(\frac{1}{2}t^2a'^2)}}{\exp{(t\epsilon)}}\le\frac{2\exp{(\frac{1}{2}t^2a^2)}}{\exp{(t\epsilon)}}$, i.e.,
$\delta_1\le\delta_2$.
\end{proof}

{\bf Remark.}
Theorem~\ref{thm:stable_3} shows that minimizing $\mathcal{D}_\Omega$ together with
the RMSEs of more than one hard predictable subsets of $\Omega$  will
help generate more stable matrix approximation solutions.

However, sometimes it is expensive to find such ``hard predictable subsets'',
because we do not know which subset of entries to choose without any prior knowledge.
Thus, we propose a solution to obtain hard predictable subsets of $\Omega$
based on only one set of easily predictable entries as follows:
1) choose $\omega\subset\Omega$, which satisfies that
$\forall (i,j)\in\omega$, $|R_{i,j}-\hat{R}_{i,j}|\le\mathcal{D}_{\Omega}(\hat{R})$,
and choose $\Omega_0 = \Omega-\omega$;
2) divide $\omega$ into $K$ non-overlapping subsets $\omega_1,...,\omega_K$ with
the condition that $\cup_{k\in[1,K]}\omega_k = \omega$, and
choose $\Omega_k = \Omega - \omega_k$ for all $1\le k\le K$; and
3) minimize $\lambda_0\mathcal{D}_\Omega(\hat{R})+\sum_{k=1}^K{\lambda_k\mathcal{D}_{\Omega_k}(\hat{R})}$
to find stable matrix approximation solutions.
The following Theorem~\ref{thm:stable_4} proves that it is desirable to minimize
$\lambda_0\mathcal{D}_\Omega(\hat{R})+\sum_{k=1}^K{\lambda_k\mathcal{D}_{\Omega_k}(\hat{R})}$
instead of $\lambda_0\mathcal{D}_\Omega(\hat{R})+(1-\lambda_0)\mathcal{D}_{\Omega_0}(\hat{R})$.

\begin{theorem}
\label{thm:stable_4}
Let $\Omega$ ($|\Omega|>2$) be a set of observed entries in $R$.
Choose $\omega\subset\Omega$, which satisfies that
$\forall (i,j)\in\omega$, $|R_{i,j}-\hat{R}_{i,j}|\le\mathcal{D}_{\Omega}(\hat{R})$.
And divide $\omega$ into $K$ non-overlapping subsets $\omega_1,...,\omega_K$ with
the condition that $\cup_{k\in[1,K]}\omega_k = \omega$.
Let $\Omega_0 = \Omega-\omega$ and $\Omega_k = \Omega - \omega_k$ for all $1\le k\le K$.
Then, for any $\epsilon>0$ and $1>\lambda_0,\lambda_1,...,\lambda_K>0$ ($\sum_{i=0}^K{\lambda_i}=1$),
$\lambda_0\mathcal{D}_\Omega(\hat{R})+\sum_{k=1}^K{\lambda_k\mathcal{D}_{\Omega_k}(\hat{R})}$ and
$\lambda_0\mathcal{D}_\Omega(\hat{R})+(1-\lambda_0)\mathcal{D}_{\Omega_0}(\hat{R})$ are $\delta_1$-stable
and $\delta_2$-stable, resp., then $\delta_1\le\delta_2$.
\end{theorem}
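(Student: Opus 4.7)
The plan is to mirror the template of Theorems~\ref{thm:stable_1}--\ref{thm:stable_3} almost verbatim. First, introduce the two zero-mean random variables
\[
X_1 = \mathcal{D}(\hat R) - \Big(\lambda_0\mathcal{D}_\Omega(\hat R) + \sum_{k=1}^K \lambda_k \mathcal{D}_{\Omega_k}(\hat R)\Big) \in [-a_1,a_1],
\]
\[
X_2 = \mathcal{D}(\hat R) - \big(\lambda_0\mathcal{D}_\Omega(\hat R) + (1-\lambda_0)\mathcal{D}_{\Omega_0}(\hat R)\big) \in [-a_2,a_2].
\]
Hoeffding's Lemma combined with Markov's inequality (applied exactly as in the previous proofs) yields $\Pr[|X_i|<\epsilon] \ge 1 - \frac{2\exp(\tfrac12 t^2 a_i^2)}{\exp(t\epsilon)}$ for $i=1,2$. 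Consequently the claim $\delta_1 \le \delta_2$ reduces to establishing $a_1 \le a_2$.

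For the comparison step I would follow the telescoping trick used in Theorem~\ref{thm:stable_3}. Since
\[
X_1 - X_2 = (1-\lambda_0)\,\mathcal{D}_{\Omega_0}(\hat R) - \sum_{k=1}^K \lambda_k\, \mathcal{D}_{\Omega_k}(\hat R) = \sum_{k=1}^K \lambda_k\big(\mathcal{D}_{\Omega_0}(\hat R) - \mathcal{D}_{\Omega_k}(\hat R)\big),
\]
subadditivity of $\sup$ gives $a_1 \le a_2 + \sup\{X_1-X_2\}$, so the heart of the argument is to bound this trailing supremum by zero. The structural hypotheses to exploit are (i) the partition $\cup_k \omega_k = \omega$ with pairwise disjoint $\omega_k$'s, which yields $|\Omega_k|=|\Omega_0|+|\omega\setminus\omega_k|$ together with the MSE decomposition $|\Omega_k|\mathcal{D}_{\Omega_k}^2(\hat R) = |\Omega_0|\mathcal{D}_{\Omega_0}^2(\hat R) + \sum_{(i,j)\in\omega\setminus\omega_k}(R_{ij}-\hat R_{ij})^2$; and (ii) the easy-entry condition $|R_{ij}-\hat R_{ij}|\le\mathcal{D}_\Omega(\hat R)$ for $(i,j)\in\omega$, which also implies $\mathcal{D}_\omega(\hat R)\le\mathcal{D}_\Omega(\hat R)$, exactly as in the proof of Theorem~\ref{thm:stable_1}. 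Summing the MSE identity across $k$ with weights $\lambda_k$ and using the partition identity $\sum_k |\omega\setminus\omega_k|=(K-1)|\omega|$ should collapse the bookkeeping to a form directly comparable with $(1-\lambda_0)\mathcal{D}_{\Omega_0}(\hat R)$.

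The main obstacle I expect is precisely translating the natural \emph{squared} MSE identity into an RMSE inequality with the correct sign. In Theorems~\ref{thm:stable_1}--\ref{thm:stable_3} the residual collapsed to a single term whose sign followed immediately from monotonicity of the RMSE under restriction to hard entries; here the residual is a convex combination across $K$ different restrictions, so I would need to combine the concavity of $\sqrt{\,\cdot\,}$ with the elementwise bound $(R_{ij}-\hat R_{ij})^2 \le \mathcal{D}_\Omega^2(\hat R)$ controlling the ``easy'' contributions from each $\omega\setminus\omega_k$. Once the trailing supremum is known to be non-positive, $a_1 \le a_2$ follows, the Hoeffding exponent for $X_1$ shrinks accordingly, and the bound $\delta_1 \le \delta_2$ is delivered, completing the argument.
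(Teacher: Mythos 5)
Your setup (zero-mean bounded variables, Hoeffding plus Markov, reduction to $a_1\le a_2$) matches the paper exactly, but there is a genuine gap in the comparison step. You reduce everything to showing $\sup\{X_1-X_2\}\le 0$ with $X_1-X_2=\sum_{k=1}^K\lambda_k\bigl(\mathcal{D}_{\Omega_0}(\hat R)-\mathcal{D}_{\Omega_k}(\hat R)\bigr)$, yet the hypotheses force this quantity to be \emph{non-negative}, not non-positive. Indeed $\Omega_k=\Omega_0\cup(\omega\setminus\omega_k)$, and every entry of $\omega$ has absolute error at most $\mathcal{D}_{\Omega}(\hat R)\le\mathcal{D}_{\Omega_0}(\hat R)$ (the latter because $\Omega_0$ is obtained by deleting easy entries from $\Omega$), so appending those entries to $\Omega_0$ can only lower the RMSE: $\mathcal{D}_{\Omega_k}(\hat R)\le\mathcal{D}_{\Omega_0}(\hat R)$ for every $k$. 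This is precisely the inequality the paper itself states and uses (``for all $k\in[1,K]$, $\mathcal{D}_{\Omega_k}\le\mathcal{D}_{\Omega_0}$''); it makes every summand of $X_1-X_2$ non-negative, so the trailing supremum you need to kill is $\ge 0$ and the bound $a_1\le a_2+\sup\{X_1-X_2\}$ yields nothing. No amount of MSE bookkeeping, partition counting, or concavity of the square root will reverse this, because the sign is already fixed elementwise by the easy-entry condition; the obstacle you flagged as a technicality about square roots is in fact a sign reversal that defeats the plan.

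The paper's own proof uses the inequality in the direction it actually holds, aggregates it into $\sum_{k=1}^K\lambda_k\mathcal{D}_{\Omega_k}(\hat R)\le(1-\lambda_0)\mathcal{D}_{\Omega_0}(\hat R)$, and then concludes $a_1\le a_2$ --- but it does so by defining $a_2$ as $\sup\{\mathcal{D}(\hat R)-(1-\lambda_0)\mathcal{D}_{\Omega_0}(\hat R)\}$, i.e., with the $\lambda_0\mathcal{D}_{\Omega}(\hat R)$ term dropped from the second objective, which is what allows its final comparison to close. Whatever one thinks of that step, your route cannot be completed as written: to salvage it you would have to restructure the comparison along the paper's lines (comparing the aggregated losses directly against $(1-\lambda_0)\mathcal{D}_{\Omega_0}$) rather than trying to prove $\sup\{X_1-X_2\}\le 0$, which is false under the stated hypotheses except in the degenerate case where all the $\mathcal{D}_{\Omega_k}$ coincide with $\mathcal{D}_{\Omega_0}$.
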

\begin{proof}
Let's first assume that $\mathcal{D}(\hat{R}) -
(\lambda_0\mathcal{D}_\Omega(\hat{R})+\sum_{k=1}^K{\lambda_k\mathcal{D}_{\Omega_k}(\hat{R})})\in[-a_1, a_1]$
($a_1=\sup\{\mathcal{D}(\hat{R}) -
(\lambda_0\mathcal{D}_\Omega(\hat{R})+\sum_{k=1}^K{\lambda_k\mathcal{D}_{\Omega_k}(\hat{R})})\}$ and
$\mathcal{D}(\hat{R}) -
(\lambda_0\mathcal{D}_\Omega(\hat{R})+(1-\lambda_0)\mathcal{D}_{\Omega_0}(\hat{R}))\in[-a_2, a_2]$
($a_2=\sup\{\mathcal{D}(\hat{R}) - ((1-\lambda_0)\mathcal{D}_{\Omega_0}(\hat{R}))\}$
are two random variables with 0 mean.

We have
$\Pr[|\mathcal{D}(\hat{R}) -
(\lambda_0\mathcal{D}_{\Omega}(\hat{R})+ \sum_{k=1}^K{\lambda_k\mathcal{D}_{\Omega_k}(\hat{R})})|<\epsilon]
\ge 1 - \frac{2\exp{(\frac{1}{2}t^2a_1^2)}}{\exp{(t\epsilon)}}$ and
$\Pr[|\mathcal{D}(\hat{R}) - ((1-\lambda_0)\mathcal{D}_{\Omega_0}(\hat{R}))|<\epsilon]
\ge 1 - \frac{2\exp{(\frac{1}{2}t^2a_2^2)}}{\exp{(t\epsilon)}}$. $\forall k\in[1,K]$
$\omega_k\subset\omega$ and $\forall (i,j)\in\omega$, $|R_{i,j}-\hat{R}_{i,j}|\le\mathcal{D}_{\Omega}(\hat{R})$,
we have for all $k\in[1,K]$, $\mathcal{D}_{\Omega_k}\le\mathcal{D}_{\Omega_0}$.
Sum the above inequation over all $k\in[1,K]$, we have
$\sum_{k=1}^K\lambda_k\mathcal{D}_{\Omega_k}\le\sum_{k=1}^K\lambda_k\mathcal{D}_{\Omega_0}
=(1-\lambda_0)\mathcal{D}_{\Omega_0}$.
Thus, $\sup\{\mathcal{D}(\hat{R}) -
(\lambda_0\mathcal{D}_\Omega(\hat{R})+\sum_{k=1}^K{\lambda_k\mathcal{D}_{\Omega_k}(\hat{R})})\}
\le
\sup\{\mathcal{D}(\hat{R}) - ((1-\lambda_0)\mathcal{D}_{\Omega_0}(\hat{R}))\}$, i.e., $a_1\le a_2$.
Thus, we can conclude that $\delta_1\le\delta_2$.
\end{proof}

{\bf Remark.}
Theorem~\ref{thm:stable_4} shows that if we can find only one subset of entries
that are easier to predict than average, then we can probe this subset of entries
to increase the stability of matrix approximations.

\subsection{Stability Analysis of Matrix Approximation in the General Setting}

So far, all theoretical analysis rely on one key assumption:
the value of $\mathcal{D}(\hat{R})-\mathcal{D}_\Omega(\hat{R})$
is within a symmetric interval, e.g., $[-a, a]$ ($a=\sup\{\mathcal{D}(\hat{R})-\mathcal{D}_\Omega(\hat{R})\}$).
The assumption holds for matrix approximation in the rating prediction tasks,
but may not hold for all kinds of matrix approximation problems,
e.g., binary matrix approximation for top-N recommendation.
For instance, binary matrix approximation typically minimizes ``0-1'' loss which is
non-convex,  so $\mathcal{D}_\Omega(\hat{R})$
is typically defined as convex surrogate losses~\cite{Cofi}.
Apparently, the infimum and supremum of convex surrogated $\mathcal{D}(\hat{R})-\mathcal{D}_\Omega(\hat{R})$
are not always symmetric around 0, e.g., the Exponential loss and Log loss.

For a more general case which holds for all kinds of matrix approximation problems,
we only assume that $E[\mathcal{D}(\hat{R})-\mathcal{D}_\Omega(\hat{R})] = 0$,
$\mathcal{D}(\hat{R})-\mathcal{D}_\Omega(\hat{R})\in [b, a]$,
where $a=\sup\{\mathcal{D}(\hat{R})-\mathcal{D}_\Omega(\hat{R})\}$ and $b=\inf\{\mathcal{D}(\hat{R})-\mathcal{D}_\Omega(\hat{R})\}$.
The following theorem proves that we can derive matrix approximation problems with
higher stability by randomly choosing $\Omega'\subset\Omega$
with the above general assumption.

\begin{theorem}
\label{thm:stable_5}
Let $\Omega$ ($|\Omega|>2$) be a set of observed entries in $R$ and
$\Omega'\subset\Omega$ be a randomly selected subset of observed entries.
Assume that, for any $1>\lambda_0,\lambda_1>0$ ($\lambda_0+\lambda_1=1$),
$\lambda_0\mathcal{D}_{\Omega}(\hat{R})+ \lambda_1\mathcal{D}_{\Omega'}(\hat{R})$ and
$\mathcal{D}_{\Omega}(\hat{R})$ are $\delta_1$-stable and $\delta_2$-stable, resp.,
then $\delta_1\le\delta_2$.
\end{theorem}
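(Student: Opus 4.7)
The plan is to replay the Markov--Hoeffding template from the proof of Theorem~\ref{thm:stable_1}, but with the non-symmetric interval $[b,a]$ in place of $[-a,a]$ and without the ``hard subset'' monotonicity that drove the earlier arguments. Concretely, I would introduce two centered random variables, $Y \eql \mathcal{D}(\hat{R}) - \mathcal{D}_{\Omega}(\hat{R})$ and $Z \eql \mathcal{D}(\hat{R}) - \bigl(\lambda_0\mathcal{D}_{\Omega}(\hat{R}) + \lambda_1 \mathcal{D}_{\Omega'}(\hat{R})\bigr)$, and derive the probability bounds from Hoeffding's Lemma in its general form $E[e^{sX}] \le \exp\bigl(\tfrac{1}{8}s^{2}(a-b)^{2}\bigr)$ rather than the $\exp\bigl(\tfrac{1}{2}s^{2}a^{2}\bigr)$ specialization used by the symmetric proofs.

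The steps would proceed as follows. First, using $\lambda_0+\lambda_1=1$, I rewrite $Z = \lambda_0 Y + \lambda_1 Y'$ where $Y' \eql \mathcal{D}(\hat{R}) - \mathcal{D}_{\Omega'}(\hat{R})$; applying the general assumption to the randomly chosen observed subset $\Omega'$ gives $E[Y']=0$ and $Y' \in [b,a]$, so $E[Z]=0$. Second, I would apply Markov's inequality together with the general Hoeffding bound separately to the upper and lower tails (as in Theorem~\ref{thm:stable_1}) to obtain, for any $t>0$,
\begin{equation*}
\Pr\!\left[|Y| < \epsilon\right] \ge 1 - \frac{2\exp\!\left(\tfrac{1}{8}t^{2}(a-b)^{2}\right)}{\exp(t\epsilon)},\qquad
\Pr\!\left[|Z| < \epsilon\right] \ge 1 - \frac{2\exp\!\left(\tfrac{1}{8}t^{2}(a'-b')^{2}\right)}{\exp(t\epsilon)},
\end{equation*}
where $a' = \sup Z$ and $b' = \inf Z$. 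Third, I compare interval widths: since $Z$ is a convex combination of $Y$ and $Y'$, both lying in $[b,a]$, we get $b \le Z \le a$, hence $a'-b' \le a-b$. This makes the Hoeffding factor for $Z$ no larger than that for $Y$, and comparing the two lower bounds yields $\delta_1 \le \delta_2$.

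The main obstacle is step three: unlike Theorems~\ref{thm:stable_1}--\ref{thm:stable_4}, we have no structural reason (such as ``$\omega$ consists of easy-to-predict entries'') to force $\sup\{\mathcal{D}_{\Omega}-\mathcal{D}_{\Omega'}\}\le 0$, so the improvement must come purely from the convex-combination bound on the range of $Z$. Two points require care: (a) justifying that the interval hypothesis on $\mathcal{D}_{\Omega}$ transfers to $\mathcal{D}_{\Omega'}$ --- this is where the phrase ``randomly selected subset of observed entries'' is used, so that $\Omega'$ is the same kind of object as $\Omega$ and the general assumption $\mathcal{D}(\hat{R}) - \mathcal{D}_{(\cdot)}(\hat{R}) \in [b,a]$ applies uniformly; and (b) handling the two-sided tail cleanly under a non-symmetric $[b,a]$, since the earlier proofs implicitly used symmetry when merging $\Pr[X \ge \epsilon]$ and $\Pr[X \le -\epsilon]$ into $\Pr[|X| \ge \epsilon]$. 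Once these are in place, the rest is bookkeeping copied verbatim from the previous theorems.
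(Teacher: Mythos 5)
Your proposal is correct and lands on the same Markov--Hoeffding template the paper uses, with one genuine (if small) difference in how the moment generating function of the combined loss is controlled. You apply Hoeffding's Lemma directly to $Z=\lambda_0 Y+\lambda_1 Y'$, using the fact that a convex combination of variables ranging over intervals of width at most $a-b$ again ranges over an interval of width at most $a-b$, so a single factor $\exp(\tfrac18 t^2(a'-b')^2)$ with $a'=\sup Z$, $b'=\inf Z$ already beats $\exp(\tfrac18 t^2(a-b)^2)$. The paper instead first invokes convexity of the exponential to write $E[e^{tZ}]\le\lambda_0 E[e^{tY}]+\lambda_1 E[e^{tY'}]$ and bounds the two terms separately, arriving at $\lambda_0 e^{\frac18 t^2(a-b)^2 - t\epsilon}+\lambda_1 e^{\frac18 t^2(a'-b')^2 - t\epsilon}$ with $a',b'$ there denoting the sup/inf of $Y'$ alone; the conclusion is the same. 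Your route is slightly tighter and uses one fewer application of Hoeffding, but it leans on the same delicate ingredient as the paper's: that the range of $\mathcal{D}(\hat R)-\mathcal{D}_{\Omega'}(\hat R)$ is no wider than that of $\mathcal{D}(\hat R)-\mathcal{D}_{\Omega}(\hat R)$. The paper derives this from $\Omega'\subset\Omega$ by asserting $\sup\{\mathcal{D}_{\Omega'}(\hat R)\}\le\sup\{\mathcal{D}_{\Omega}(\hat R)\}$ and $\inf\{\mathcal{D}_{\Omega'}(\hat R)\}\ge\inf\{\mathcal{D}_{\Omega}(\hat R)\}$, whereas you appeal to the general assumption applying ``uniformly'' to any randomly selected subset; both justifications are informal, and both proofs also tacitly require $E[Y']=0$ for Hoeffding to apply to $Y'$ (or to $Z$), a point you at least flag explicitly.
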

\begin{proof}
Based on Markov's inequality and Lemma~\ref{lm:hoeffding}, for any $t>0$ and $\epsilon>0$, we have
\begin{equation}
\Pr[\mathcal{D}(\hat{R}) - \mathcal{D}_{\Omega}(\hat{R})\ge\epsilon] \le
\frac{E[e^{t(\mathcal{D}(\hat{R}) - \mathcal{D}_{\Omega}(\hat{R}))}]}{e^{t\epsilon}}
\le \frac{e^{\frac{1}{8}t^2(a-b)^2}}{e^{t\epsilon}}, \nonumber
\end{equation}
where $a=\sup\{\mathcal{D}(\hat{R})-\mathcal{D}_\Omega(\hat{R})\}$ and $b=\inf\{\mathcal{D}(\hat{R})-\mathcal{D}_\Omega(\hat{R})\}$.

Based on Markov's inequality and the convexity of exponential function, we have
\begin{eqnarray}
\Pr[\mathcal{D}(\hat{R})-\lambda_0\mathcal{D}_\Omega(\hat{R}) - \lambda_1\mathcal{D}_{\Omega'}(\hat{R})\ge\epsilon] &\le&
\frac{E[e^{t(\mathcal{D}(\hat{R})-\lambda_0\mathcal{D}_\Omega(\hat{R}) - \lambda_1\mathcal{D}_{\Omega'}(\hat{R}))}]}{e^{t\epsilon}} \nonumber\\
&=& \frac{E[e^{t(\lambda_0(\mathcal{D}(\hat{R})-\mathcal{D}_\Omega(\hat{R})) +
\lambda_1(\mathcal{D}(\hat{R})-\mathcal{D}_{\Omega'}(\hat{R})))}]}{e^{t\epsilon}} \nonumber \\
&\le& \frac{ \lambda_0E[e^{t(\mathcal{D}(\hat{R}) - \mathcal{D}_{\Omega}(\hat{R}))}]
+\lambda_1E[e^{t(\mathcal{D}(\hat{R}) - \mathcal{D}_{\Omega'}(\hat{R}))}] } {e^{t\epsilon}} \nonumber \\
&\le& \lambda_0\frac{e^{\frac{1}{8}t^2(a-b)^2}}{e^{t\epsilon}} + \lambda_1\frac{e^{\frac{1}{8}t^2(a'-b')^2}}{e^{t\epsilon}}. \nonumber
\end{eqnarray}
where $a$ and $b$ are the same as above and
$a'=\sup\{\mathcal{D}(\hat{R})-\mathcal{D}_{\Omega'}(\hat{R})\}$ and $b'=\inf\{\mathcal{D}(\hat{R})-\mathcal{D}_{\Omega'}(\hat{R})\}$.

Based on the properties of infimum and supremum, we have
\begin{eqnarray}
(a-b)^2 &=& ((\sup\{\mathcal{D}(\hat{R})\} - \inf\{\mathcal{D}(\hat{R})\}) +
    (\sup\{\mathcal{D}_{\Omega}(\hat{R})\} - \inf\{\mathcal{D}_{\Omega}(\hat{R})\}) )^2 \nonumber \\
(a'-b')^2 &=& ((\sup\{\mathcal{D}(\hat{R})\} - \inf\{\mathcal{D}(\hat{R})\}) +
    (\sup\{\mathcal{D}_{\Omega'}(\hat{R})\} - \inf\{\mathcal{D}_{\Omega'}(\hat{R})\}) )^2 \nonumber
\end{eqnarray}
Since $\Omega'\subset\Omega$, we know that
$\sup\{\mathcal{D}_{\Omega'}(\hat{R})\}\le \sup\{\mathcal{D}_{\Omega}(\hat{R})\}$ and
$\inf\{\mathcal{D}_{\Omega'}(\hat{R})\}\ge \inf\{\mathcal{D}_{\Omega}(\hat{R})\}$,
so that $0\le\sup\{\mathcal{D}_{\Omega'}(\hat{R})\} - \inf\{\mathcal{D}_{\Omega'}(\hat{R})\} \le \sup\{\mathcal{D}_{\Omega}(\hat{R})\} - \inf\{\mathcal{D}_{\Omega}(\hat{R})\}$. This will turn out that $(a'-b')^2\le(a-b)^2$, which means
\begin{equation}
\lambda_0\frac{e^{\frac{1}{8}t^2(a-b)^2}}{e^{t\epsilon}} + \lambda_1\frac{e^{\frac{1}{8}t^2(a'-b')^2}}{e^{t\epsilon}}
\le \frac{e^{\frac{1}{8}t^2(a-b)^2}}{e^{t\epsilon}}. \nonumber
\end{equation}
Based on the above inequality, we can conclude that $\delta_1\le\delta_2$.
\end{proof}

{\bf Remark.}
Theorem~\ref{thm:stable_5} above indicates that, if we randomly choose
a subset of entries from $\Omega$ to form $\Omega'$, then
$\lambda_0\mathcal{D}_{\Omega}(\hat{R})+ \lambda_1\mathcal{D}_{\Omega'}(\hat{R})$ will also
have a higher probability of being close to $\mathcal{D}(\hat{R})$ than $\mathcal{D}_{\Omega}(\hat{R})$.
Therefore, minimizing $\lambda_0\mathcal{D}_{\Omega}(\hat{R})+\lambda_1\mathcal{D}_{\Omega'}(\hat{R})$
is more desirable than minimizing $\mathcal{D}_{\Omega}(\hat{R})$. Note that,
as shown in the proof, the sharpness of the stability bound depends on
$\sup\{\mathcal{D}_{\Omega'}(\hat{R})\}$ and $\inf\{\mathcal{D}_{\Omega'}(\hat{R})\}$.
However, it is non-trivial to directly infer $\sup\{\mathcal{D}_{\Omega'}(\hat{R})\}$
and $\inf\{\mathcal{D}_{\Omega'}(\hat{R})\}$ from a randomly chosen $\Omega'$.
In this case, $\Omega'$ should be considered as a tunable parameter in practice.
A better choice of $\Omega'$ will yield sharper
stability bound and thus yield better model performance.

Similarly, we can also select subsets from $\Omega'$ and form optimization problems
that are even more stable than $\lambda_0\mathcal{D}_{\Omega}(\hat{R})+ \lambda_1\mathcal{D}_{\Omega'}(\hat{R})$.
The following theorems formally prove this idea.

\begin{theorem}
\label{thm:stable_6}
Let $\Omega$ ($|\Omega|>2$) be a set of observed entries in $R$.
Let $\Omega_1\subset\Omega_2\subset ... \subset\Omega_K\subset\Omega$ ($K>1$)
be randomly selected subsets of observed entries.
Assume that, for any $1>\lambda_0,\lambda_1,...,\lambda_K>0$ ($\lambda_0+\sum_{s=1}^K\lambda_s=1$),
$\lambda_0\mathcal{D}_{\Omega}(\hat{R})+ \sum_{s=1}^K\lambda_s\mathcal{D}_{\Omega_s}(\hat{R})$ and
$\mathcal{D}_{\Omega}(\hat{R})$ are $\delta_1$-stable and $\delta_2$-stable, resp.,
then $\delta_1\le\delta_2$.
\end{theorem}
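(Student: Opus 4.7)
The plan is to imitate the proof of Theorem~\ref{thm:stable_5} but generalize the two-term convexity step to a $(K{+}1)$-term one. The only structural fact about the nested family $\Omega_1\subset\cdots\subset\Omega_K\subset\Omega$ that the argument needs is that each $\Omega_s$ is a subset of $\Omega$; the nesting itself is not exploited in this particular bound, so each $\Omega_s$ can be treated symmetrically.

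First I would set up, for any $t>0$ and $\epsilon>0$, the Markov bound
\begin{equation}
\Pr[\mathcal{D}(\hat{R}) - (\lambda_0\mathcal{D}_\Omega(\hat{R}) + \textstyle\sum_{s=1}^K\lambda_s\mathcal{D}_{\Omega_s}(\hat{R}))\ge\epsilon]
\le \frac{E[e^{t(\lambda_0 X_0 + \sum_s \lambda_s X_s)}]}{e^{t\epsilon}},\nonumber
\end{equation}
where $X_0=\mathcal{D}(\hat{R})-\mathcal{D}_\Omega(\hat{R})$ and $X_s=\mathcal{D}(\hat{R})-\mathcal{D}_{\Omega_s}(\hat{R})$. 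Since $\lambda_0+\sum_s\lambda_s=1$ and all weights are positive, Jensen's inequality applied to the convex function $\exp$ gives
\begin{equation}
e^{t(\lambda_0 X_0 + \sum_s \lambda_s X_s)} \le \lambda_0 e^{tX_0} + \sum_{s=1}^K \lambda_s e^{tX_s},\nonumber
\end{equation}
which upper bounds the numerator by a convex combination of the individual moment generating terms.

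Next I would apply Hoeffding's Lemma (Lemma~\ref{lm:hoeffding}) to each $E[e^{tX_s}]$ separately, obtaining $E[e^{tX_s}]\le\exp\bigl(\tfrac{1}{8}t^2(a_s-b_s)^2\bigr)$ with $a_s=\sup X_s$, $b_s=\inf X_s$, and similarly $E[e^{tX_0}]\le\exp\bigl(\tfrac{1}{8}t^2(a-b)^2\bigr)$ with $a=\sup X_0$, $b=\inf X_0$. Because $\Omega_s\subset\Omega$, the same supremum/infimum argument used at the end of the proof of Theorem~\ref{thm:stable_5} gives $0\le\sup\mathcal{D}_{\Omega_s}(\hat{R})-\inf\mathcal{D}_{\Omega_s}(\hat{R})\le\sup\mathcal{D}_\Omega(\hat{R})-\inf\mathcal{D}_\Omega(\hat{R})$, hence $(a_s-b_s)^2\le(a-b)^2$ for every $s\in[1,K]$. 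Combining these, the convex combination is dominated by $\exp\bigl(\tfrac{1}{8}t^2(a-b)^2\bigr)$ since the weights sum to one, so the numerator of the Markov bound for the combined quantity is at most the corresponding numerator for $\mathcal{D}_\Omega(\hat{R})$ alone. Running the same argument on the lower tail and combining the two tails yields $\delta_1\le\delta_2$.

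I do not expect any serious obstacle: the whole proof is a direct $K$-term generalization of Theorem~\ref{thm:stable_5}, with Jensen doing the work that a single convex combination did there. The only thing to be careful about is to keep the $\lambda_0$ term inside the Jensen combination (not separated out) so that the $(K+1)$ weights indeed form a probability vector; once that is done, each Hoeffding factor is independently bounded by $\exp(\tfrac{1}{8}t^2(a-b)^2)$ and the convex combination collapses cleanly to the same bound that controls the $\mathcal{D}_\Omega(\hat{R})$-only case.
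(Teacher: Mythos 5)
Your proposal is correct and matches the paper's intent exactly: the paper omits this proof, stating only that it is ``similar to that of Theorem~\ref{thm:stable_5}'', and your argument is precisely that generalization --- Markov's inequality, Jensen's inequality on the $(K{+}1)$-term convex combination inside the exponential, Hoeffding's Lemma on each term, and the subset-based bound $(a_s-b_s)^2\le(a-b)^2$. Your observation that the nesting $\Omega_1\subset\cdots\subset\Omega_K$ is never actually used is also consistent with the paper, since Theorem~\ref{thm:stable_7} drops the nesting and is proved the same way.
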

\begin{proof}
This proof is omitted as it is similar to that of Theorem~\ref{thm:stable_5}.
\end{proof}

Similarly, we can also select $K$ different random subsets to further improve the
stability of matrix approximation, which is formally described in the following Theorem~\ref{thm:stable_7}.
\begin{theorem}
\label{thm:stable_7}
Let $\Omega$ ($|\Omega|>2$) be a set of observed entries in $R$.
Let $\Omega_1\, \Omega_2, ... , \Omega_K$ ($K>1$) be randomly selected subsets of $\Omega$..
Assume that, for any $1>\lambda_0,\lambda_1,...,\lambda_K>0$ ($\lambda_0+\sum_{s=1}^K\lambda_s=1$),
$\lambda_0\mathcal{D}_{\Omega}(\hat{R})+ \sum_{s=1}^K\lambda_s\mathcal{D}_{\Omega_s}(\hat{R})$ and
$\mathcal{D}_{\Omega}(\hat{R})$ are $\delta_1$-stable and $\delta_2$-stable, resp.,
then $\delta_1\le\delta_2$.
\end{theorem}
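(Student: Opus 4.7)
The plan is to lift the two-term argument of Theorem~\ref{thm:stable_5} directly to a $(K+1)$-term convex combination by invoking Jensen's inequality on the exponential. Because the subsets $\Omega_1,\dots,\Omega_K$ are arbitrary rather than nested, I cannot appeal to any ordering among the $\mathcal{D}_{\Omega_s}(\hat{R})$ as was done in Theorem~\ref{thm:stable_6}; but the inclusion $\Omega_s\subset\Omega$ alone is enough for the range-contraction step that drives the proof.

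First I would introduce the zero-mean random variable
$$Y \;:=\; \mathcal{D}(\hat{R}) \,-\, \Bigl(\lambda_0\mathcal{D}_{\Omega}(\hat{R}) + \sum_{s=1}^{K}\lambda_s\mathcal{D}_{\Omega_s}(\hat{R})\Bigr),$$
rewrite it as the convex combination
$$Y \;=\; \lambda_0\bigl(\mathcal{D}(\hat{R})-\mathcal{D}_{\Omega}(\hat{R})\bigr) \,+\, \sum_{s=1}^{K}\lambda_s\bigl(\mathcal{D}(\hat{R})-\mathcal{D}_{\Omega_s}(\hat{R})\bigr),$$
and apply the exponential form of Markov's inequality to control $\Pr[Y\ge\epsilon]$ and $\Pr[Y\le-\epsilon]$ separately. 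Then, by convexity of $x\mapsto e^{tx}$ and Jensen's inequality, $E[e^{tY}]$ is bounded above by the corresponding $(K+1)$-term convex combination of $E[e^{t(\mathcal{D}(\hat{R})-\mathcal{D}_{\Omega}(\hat{R}))}]$ and the $E[e^{t(\mathcal{D}(\hat{R})-\mathcal{D}_{\Omega_s}(\hat{R}))}]$. Hoeffding's Lemma then bounds each summand by $\exp(t^2(a_s-b_s)^2/8)$, where $[b_s,a_s]$ is the range of the $s$-th increment and $[b,a]$ is the range of $\mathcal{D}(\hat{R})-\mathcal{D}_{\Omega}(\hat{R})$.

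The crucial step is the range comparison inherited from Theorem~\ref{thm:stable_5}: for every $s$, because $\Omega_s\subset\Omega$, one has $\sup\mathcal{D}_{\Omega_s}(\hat{R})\le\sup\mathcal{D}_{\Omega}(\hat{R})$ and $\inf\mathcal{D}_{\Omega_s}(\hat{R})\ge\inf\mathcal{D}_{\Omega}(\hat{R})$, whence $(a_s-b_s)^2\le(a-b)^2$. Using $\lambda_0+\sum_{s=1}^{K}\lambda_s=1$, the convex combination of the individual Hoeffding bounds collapses to the single bound $\exp(t^2(a-b)^2/8)$, which is exactly the bound attained by $\mathcal{D}_{\Omega}(\hat{R})$ alone in the proof of Theorem~\ref{thm:stable_5}. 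The symmetric tail bound for $-Y$ is identical, and combining the two yields $\delta_1\le\delta_2$.

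I do not anticipate a genuine obstacle: the argument is essentially a $K$-fold replay of Theorem~\ref{thm:stable_5}, and the lack of nesting among the $\Omega_s$ is harmless because each subset is bounded against $\Omega$ via Hoeffding independently of the others. The only mildly subtle point, also present in Theorem~\ref{thm:stable_5}, is that this reasoning yields only $\delta_1\le\delta_2$; a strict improvement would require at least one $\mathcal{D}_{\Omega_s}(\hat{R})$ to have a strictly narrower range than $\mathcal{D}_{\Omega}(\hat{R})$, which is the typical case but not part of the hypothesis.
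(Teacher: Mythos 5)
Your proposal matches the paper's intended argument: the paper omits the proof of this theorem, stating only that it is similar to that of Theorem~\ref{thm:stable_5}, and your $K$-fold replay of that proof --- Markov's inequality plus convexity of $x\mapsto e^{tx}$ to split $E[e^{tY}]$ into a $(K+1)$-term convex combination, Hoeffding's Lemma on each increment, and the range contraction $(a_s-b_s)^2\le(a-b)^2$ from $\Omega_s\subset\Omega$ --- is exactly that extension. Your closing remark that the hypotheses yield only $\delta_1\le\delta_2$ rather than a strict improvement is also consistent with the paper's statement.
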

\begin{proof}
This proof is omitted as it is similar to that of Theorem~\ref{thm:stable_5}.
\end{proof}

{\bf Remark.}
Theorem~\ref{thm:stable_6} and Theorem~\ref{thm:stable_7} above indicate that,
if we randomly choose different subsets of entries from $\Omega$, then we can obtain
even more stable optimization problems than only choosing one $\Omega'$.
The performance of adopting Theorem~\ref{thm:stable_6} will be influenced by
the selection of the largest subset ($\Omega_K$) because all the other subsets depend on it.
Therefore, a bad selection of $\Omega_K$ will not significantly improve the stability.
But the selected subsets are independent in Theorem~\ref{thm:stable_7}. As such,
a few bad selections will not significantly affect the performance. Therefore, it will be more desirable
to adopt Theorem~\ref{thm:stable_7} than Theorem~\ref{thm:stable_6} if
there is no clear clue to select proper subsets.

\section{Stable Matrix Approximation Algorithm for Collaborative Filtering}
\label{sec:algorithm}
This section first presents the stable matrix approximation (SMA) method for
the rating prediction task, and then presents stable matrix approximation
for the top-N recommendation task.
Finally, we present how to solve the algorithm stability optimization
problem using a stochastic gradient descent method.

\subsection{SMA for Rating Prediction}
In the rating prediction task, the goal of stable matrix approximation-based
collaborative filtering is to  minimize the root mean square error (RMSE),
i.e., $\mathcal{D}_{\Omega}(\hat{R}) = \sqrt{\frac{1}{|\Omega|}\sum_{(i,j)\in\Omega}{(R_{i,j}-\hat{R}_{i,j})^2}}$.
Here, we present the stable matrix approximation algorithm in terms of RMSE.

\subsubsection{Model Formulation}
Singular value decomposition (SVD) is one of the commonly used methods for
matrix approximation in the rating prediction task~\cite{Candes10}.
Based on the analysis of stable matrix approximation described in the
previous section, it is desirable to minimize the loss functions that
will lead to solutions with good generalization performance. Let $\{\Omega_1,...,\Omega_K\}$
be subsets of $\Omega$ which satisfy that $\forall s\in[1,K]$, $\mathcal{D}_{\Omega_s}\ge\mathcal{D}_{\Omega}$.
Then, following Theorem~\ref{thm:stable_4},
we propose a new extension of SVD. Note that, extensions to other matrix approximation methods
can be similarly derived.
\begin{align}
&\hat{R} = \argmin_{X}~\lambda_0\mathcal{D}_{\Omega}(X) + \sum_{s=1}^K{\lambda_s\mathcal{D}_{\Omega_s}(X)} \nonumber\\
&s.t.~~ rank(X)=r. \label{eqn:lossFunctnByFro}
\end{align}
where $\lambda_0, \lambda_1,..., \lambda_K$ define the contributions
of each component in the loss function.

\subsubsection{Hard Predictable Subsets Selection}

The key step in Equation~\ref{eqn:lossFunctnByFro} 
is to obtain subsets of $\Omega$ --- $\{\Omega_1,...,\Omega_K\}$ which satisfy
that $\forall s\in[1,K]$, $\mathcal{D}_{\Omega_s}\ge\mathcal{D}_{\Omega}$.
To obtain such $\Omega_s$ is not trivial, because we can only check if
the condition is satisfied with the final model. But the final model cannot
be known before we define and optimize a given loss function.

First, we propose the following idea to address this issue:
1) approximate the target matrix $R$ with existing matrix approximation solutions, e.g., RSVD~\cite{paterek2007improving};
2) choose each entry $(i,j)\in\Omega$ with high probability if
$|R_{i,j}-\hat{R}_{i,j}|<\mathcal{D}_\Omega$ and small probability otherwise; and
3) obtain $\Omega'$ by removing the chosen entries to satisfy the condition of
Proposition~\ref{prop:stable_1}, or probe $\Omega'$ to find hard predictable subsets
that satisfy the condition of Theorem~\ref{thm:stable_4}.
By assuming that other matrix approximation methods will not dramatically differ from
the final model of SMA, we can ensure that $\Omega'$ will satisfy
$\mathcal{D}_{\Omega'}\ge\mathcal{D}_{\Omega}$ with high probability.

\subsection{SMA Algorithm for Top-N Recommendation}
\label{sec:algorithm_topn}
In the top-N recommendation task, the goal of stable matrix approximation
is to minimize the ``0-1'' error as in classification tasks~\cite{Srebro04},
i.e., $\mathcal{D}_{\Omega} = \frac{1}{|\Omega|}\sum_{(i,j)\in\Omega}{\mathbbm{1}(R_{i,j},\hat{R}_{i,j})}$
where $\mathbbm{1}(x,y)$ is an indicator function. $\mathbbm{1}(x,y) = 1$ if $x=y$,
and $\mathbbm{1}(x,y) = 0$ otherwise.
Here, we present the stable matrix approximation algorithm based on the
above ``0-1'' loss.

\subsubsection{Loss Functions}
Since the original ``0-1'' loss is not differentiate, gradient-based
learning algorithms, e.g., SGD, cannot be applied to learn the models.
Therefore, surrogate loss functions are commonly adopted to define convex losses.
The following popular surrogate loss functions~\cite{nguyen2009surrogate,Lee14} are adopted in this paper:
1) mean square error (MSE): $L_{MSE}(\hat{R}_{i,j}, R_{i,j}) = (\hat{R}_{i,j} - R_{i,j})^2$;
2)  log loss (Log): $L_{Log}(\hat{R}_{i,j}, R_{i,j}) = \log(1+\exp\{-\hat{R}_{i,j}R_{i,j}\})$;
and 3) exponential loss (Exp): $L_{Exp}(\hat{R}_{i,j}, R_{i,j}) = \exp\{-\hat{R}_{i,j}R_{i,j}\}$.
After introducing surrogate loss functions, the new loss function of SMA for top-N recommendation
can be defined as follows:
\begin{equation}
\mathcal{D}_{\Omega} = \frac{1}{|\Omega|}\sum_{(i,j)\in\Omega}{L_{*}(R_{i,j},\hat{R}_{i,j})}
\end{equation}
where $L_{*}$ can be any of the surrogate loss functions. The new loss function
is convex because the surrogate loss functions are convex, so that it can be
optimized using gradient-based learning algorithms such as SGD.

In real-world top-N recommendation tasks, the positive ratings are very limited,
which causes severe data imbalance issue. To address this issue, weighting is
adopted by many matrix approximation methods~\cite{Hu08,hsieh2015pu} as follows:
\begin{equation}
\label{eqn:weighted_loss}
\mathcal{D}_{\Omega} = \frac{1}{|\Omega|}\sum_{(i,j)\in\Omega}{W_{i,j}L_{*}(R_{i,j},\hat{R}_{i,j})}
\end{equation}
where $W_{i,j}$ is the weight for entry $(i,j)$. Typically, $W_{i,j}$ is set
to a large value when $R_{i,j}=1$ and a small value when $R_{i,j}=-1$. In this paper,
we fix the weights for positive and negative examples for simplicity of analysis,
i.e., we use a universal weight for all positive examples and another universal
weight for all negative examples.
Recent works~\cite{Hu08,hsieh2015pu} have proved that weighed matrix approximation
methods can improve accuracy of top-N recommendation on implicit feedback data.
Similarly, we can extend matrix approximation with the weighted loss above to generate
a stable matrix approximation method as in Equation~\ref{eqn:lossFunctnByFro}.

\subsubsection{Subset Selection}

Similar to the rating prediction task, subsets of $\Omega$ should be selected to
improve the stability of learned MA models for top-N recommendation.
According to Theorem~\ref{thm:stable_5}, we can randomly choose $\Omega'$ and
optimize the following problem with better stability:
\begin{align}
&\hat{R} = \argmin_{X}~\lambda_0\mathcal{D}_{\Omega}(X) + \lambda_1\mathcal{D}_{\Omega'}(X) \nonumber\\
&s.t.~~ rank(X)=r. \label{eqn:loss_topn_rank}
\end{align}
However, a random $\Omega'$ could improve stability, but it may also hurt optimization
accuracy (training accuracy). Therefore, the test accuracy may not be improved.
To further improve the stability of learned models without much degradation in
optimization accuracy, it is more desirable to select $\Omega'$ as the hard-predictable
subsets, because hard-predictable subsets have tighter gap between infimum and supremum,
i.e., can achieve smaller $\delta$ in the stability measure. For top-N recommendation,
we can assume that examples near the decision boundary are harder to predict than others.
Then, we can select $\Omega'$ as follows:
\begin{equation}
\label{eqn:subset_topn}
\Omega' = \{(i,j) : \hat{R}_{i,j}\in[-\gamma, \gamma]\}.
\end{equation}
where $\gamma>0$ defines the margin to select the examples. The above definition means
that entry $(i,j)$ will be selected in $\Omega'$ if its predicted rating is near 0.
Then, for the $t$-th epoch, we can easily select $\Omega'$ based on the model output
at the $t-1$-th epoch.

\subsubsection{Discussion}

In SMA for top-N recommendation, we change the original ``0-1'' loss to
surrogate losses and adopt weighting to give higher weights to rare
positive examples. Here, we show that the stability analysis from Theorem~\ref{thm:stable_5}
to Theorem~\ref{thm:stable_7} can still be applied to the surrogated and weighted
loss functions. Assuming that the entries in $\Omega$ are independently and identically
selected from its distribution, we can know that the ratio between positive examples
and negative examples is the same for $\Omega$ and the whole set of examples. Then,
we can know that $E[\mathcal{D}(\hat{R}) - \mathcal{D}_{\Omega}(\hat{R})] = 0$,
which means that the precondition of Theorem~\ref{thm:stable_5}
to Theorem~\ref{thm:stable_7} holds. Therefore, we can apply the stability analysis
in these theorems in SMA for top-N recommendation task as described above.

\subsection{The SMA Learning Algorithm}
\begin{algorithm}[tbh!]
\caption{The SMA Learning Algorithm}
\label{alg:sma}
\begin{algorithmic}[1]
\REQUIRE {
$R$ is the targeted matrix, $\Omega$ is the set of entries in $R$,
and $\hat{R}$ is an approximation of $R$ by existing matrix approximation methods.
$p>0.5$ is the predefined probability for entry selection.
$\mu_1$ and $\mu_2$ are the coefficients for L2-regularization.}
\STATE $\Omega' = \emptyset$;
\FOR {each $(i,j)\in\Omega$}
    \STATE randomly generate $\rho\in[0,1]$;
    \IF{($|R_{i,j}-\hat{R}_{i,j}|\le\mathcal{D}_\Omega$ \& $\rho\le p$) or
        ($|R_{i,j}-\hat{R}_{i,j}|>\mathcal{D}_\Omega$ \& $\rho\le 1-p$) }
        \STATE $\Omega' \gets \Omega'\cup\{(i,j)\}$;
    \ENDIF
\ENDFOR
\STATE randomly divide $\Omega'$ into $\omega_1,...,\omega_K$ ($\cup_{k=1}^K\omega_i=\Omega'$);
\STATE for all $k\in[1,K]$, $\Omega_k = \Omega - \omega_k$;
\STATE $(\hat{U},\hat{V}) \eql \argmin_{U, V} [\sum_{k=1}^K{\lambda_k\mathcal{D}_{\Omega_k}(U^{T}V)}$
    $+\lambda_0\mathcal{D}_{\Omega}(UV^{T})
    +\mu_1 \parallel {U} \parallel^{2} + \mu_2 \parallel {V} \parallel^{2}]$
\STATE return $\hat{R} = \hat{U}\hat{V}^{T}$
\end{algorithmic}
\end{algorithm}

Algorithm~\ref{alg:sma}. shows the pseudo-code of the proposed SMA learning
algorithm to solve
the optimization problem defined in Equation~\ref{eqn:lossFunctnByFro}
by adopting hard predictable subsets.
From Step 1 to 9, we obtain $K$ different hard predictable entry sets.
Note that, for top-N recommendation, hard predictable subsets can be
selected by Equation~\ref{eqn:subset_topn}.
In Step 10, the optimization is performed by
stochastic gradient descent (SGD), the details of which are trivial
and thus omitted here. Also, $L_2$ regularization is adopted in Step 10.
Note that, other types of optimization methods and regularization techniques
can also be used in Algorithm~\ref{alg:sma}.
The complexity of Step 1 to 9 is $O(|\Omega|)$ for the rating prediction task and
$O(mn)$ for the top-N recommendation task, where $|\Omega|$
is the number of observed entries in $R$ and $m,n$ is the matrix size.
The complexity of Step 10 is $O(rmn)$ per-iteration,
where $r$ is the rank and $m,n$ is the matrix size.
Thus, the computation complexity of SMA is similar to that of
classic matrix approximation methods, such as regularized SVD~\cite{paterek2007improving}.

\section{Experiments}
\label{sec:experiment}

In this section, we first introduce the experimental setup, including
dataset description, parameter setting, evaluation metrics and compared methods.
Then, we analyze the performance of SMA for the rating prediction task
from the following aspects:
1) generalization performance of SMA;
2) sensitivity of SMA with different parameters, e.g.,
rank $r$ and the number of non-overlapping subsets $K$;
3) accuracy comparison between SMA and seven state-of-then-art matrix approximation
based recommendation algorithms, including four single MA methods and three ensemble methods;
and 4) SMA's accuracy in different data sparsity settings.
After that, we analyze the performance of SMA for the top-N recommendation task
from the following aspects:
1) generalization performance of SMA;
2) sensitivity of SMA with different parameters, e.g., surrogate loss functions and
$\gamma$ values;
and 3) accuracy comparison between SMA and four state-of-the-art top-N recommendation algorithms.

\subsection{Experimental Setup}
{\bf Data Description.} Four popular datasets are adopted to evaluate SMA:
1) MovieLens\footnote{https://grouplens.org/datasets/movielens/} 100K (943 users, 1682 movies, $10^5$ ratings);
2) MovieLens 1M ({\scriptsize${\sim}$}6k users, 4k items, ${10^6}$ ratings);
3) MovieLens 10M ({\scriptsize${\sim}$}70k users, 10k items, ${10^7}$ ratings) and
4) Netflix ({\scriptsize${\sim}$}480k users, 18k items, ${10^8}$ ratings).
These datasets are rating-based, so we follow other works~\cite{rendle09,Ning11} to
transform the data to binary data for the top-N recommendation task
in which the models predict whether a user will rate an item or not.
For each dataset, we randomly split it into training and test sets and keep the
ratio of training set to test set as 9:1. All experimental results are presented
by averaging the results over five different random train-test splits.

{\noindent\bf Parameter Setting.} For the rating prediction task, we use learning rate $v = 0.001$ for stochastic
gradient decent, $\mu_1 = 0.06$ for $L_2$-regularization coefficient,
$\epsilon = 0.0001$ for gradient descent convergence threshold,
and $T = 250$ for maximum number of iterations.
For the top-N recommendation task, we use learning rate $v = 0.001$ for
stochastic gradient decent, $\mu_1 = 0.001$ for $L_2$-regularization coefficient,
$\epsilon = 0.0001$ for gradient descent convergence threshold,
and $T = 2000$ for maximum number of iterations. Optimal parameters
of the compared methods are chosen from their original papers.
Note that adaptive learning rate methods, e.g., AdaError~\cite{adaerror}, can be
adopted to further improve the accuracy of the proposed method.

{\noindent\bf Evaluation Metrics.} The accuracy of rating prediction is evaluated using root mean
square error (RMSE) as follows:
$\sqrt{\frac{1}{|\Omega|}\sum_{(i,j)\in\Omega}{(R_{i,j}-\hat{R}_{i,j})^2}}$.
The accuracy of top-N recommendation is evaluated using Precision and
normalized discounted cumulative gain (NDCG) as follows:
1) Precision$@$N for a targeted user $u$ can be computed as follows:
$Precision@N = {|I_r\cap I_u|}/|I_r|$
where $I_r$ is the list of top N recommendations and $I_u$ is the list of items
that $u$ is interested in and
2) NDCG$@$N for a targeted user $u$ can be computed as follows:
$NDCG@N=DCG@N/IDCG@n$, where $DCG@N = \sum_{k=1}^n{(2^{rel_i}-1)/log_2(i+1)}$
and $IDCG@n$ is the value of $DCG@N$ with perfect ranking
($rel_i=1$ if $u$ is interested to the $i$-th recommendation and $rel_i=0$ otherwise).

{\noindent\bf Compared Methods.} For rating prediction task, we compare the performance of SMA with
four single matrix approximation methods and three ensemble matrix approximation methods as follows:
\begin{itemize}
    \item Regularized SVD~\cite{paterek2007improving} is a popular matrix
    factorization method, in which user/item features are estimated
    by minimizing the sum-squared error using $L_2$ regularization.
    \item BPMF~\cite{salakhutdinov2008bayesian} is a Bayesian extension of PMF
    with model parameters and hyperparameters estimated using Markov chain Monte Carlo method.
    \item APG~\cite{toh2010accelerated} computes the approximation by
    solving a nuclear norm regularized linear least squares problem to speed up convergence.
    \item GSMF~\cite{yuan2014recommendation}
    can transfer information among multiple types of user behaviors
    by modeling the shared and private latent factors with group sparsity regularization.
    \item DFC~\cite{mackey2011divide} is an ensemble method,
    which divides a large-scale matrix factorization task into
    smaller subproblems, solves each other in parallel, and finally combines the subproblem solutions to improve accuracy.
    \item LLORMA~\cite{lee2013local} is an ensemble method, which assumes that the original matrix is
    described by multiple low-rank submatrices constructed by non-parametric kernel smoothing techniques.
    \item WEMAREC~\cite{Chen15} is an ensemble method, which constructs biased model by
    weighting strategy to address the insufficient data issue in each submatrix and integrates
    different biased models to achieve higher accuracy.
\end{itemize}

For top-N recommendation, we compare SMA with the following four state-of-the-art
collaborative filtering methods:
\begin{itemize}
\item WRMF~\cite{Hu08} assigns point-wise confidences to different ratings
so that positive examples will have much larger confidence than negative examples.
\item BPR~\cite{rendle09} minimizes a pair-wise loss function to optimize ranking
measures, e.g., NDCG. They proposed different versions of BPR methods, e.g., BPR-kNN
and BPR-MF. We compare SMA with the BPR-MF, which is also MA-based recommendation method;
\item AOBPR~\cite{Rendle14} improves the original BPR method by a non-uniform
item sampler and meanwhile oversamples informative pairs to speed up convergence.
\item SLIM~\cite{Ning11} generates top-N recommendations by aggregating user rating
profiles using sparse representation.
In their method, the weight matrix of user ratings is obtained by solving an $L_1$
and $L_2$ regularized optimization problem.
\end{itemize}

\subsection{Rating Prediction Evaluation}

\subsubsection{Generalization Performance}

\begin{figure*}[tbh!]
    \centering
      \includegraphics[width=0.5\textwidth]{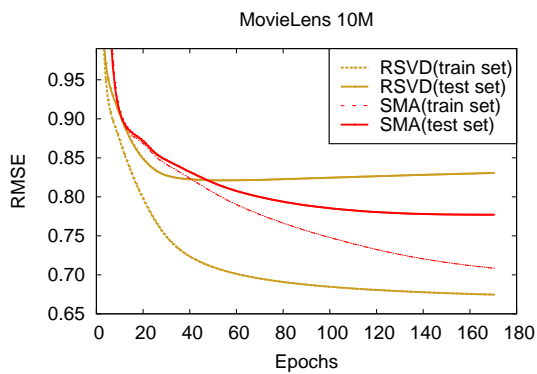}
      \caption{Training and test errors vs. epochs of RSVD and SMA on the MovieLens 10M dataset.
       \label{fig:rmseOfEpoch}}
\end{figure*}

Figure~\ref{fig:rmseOfEpoch} compares training/test errors of SMA
and RSVD with different epochs on the MovieLens 10M dataset (rank $r = 20$ and subset number $K=3$).
As we can see, the differences between training and test error
of SMA are much smaller than that of RSVD. Moreover, the training error and test error
are very close when epoch is less than 100. This result demonstrates
that SMA can indeed find models that have good generalization performance and
yield small generalization error during the training process.

\subsubsection{Sensitivity Analysis}
\label{sec:exp}

\begin{figure*}[tbh!]
      \centering$
      \begin{array}{cc}
      \includegraphics[width=0.49\textwidth]{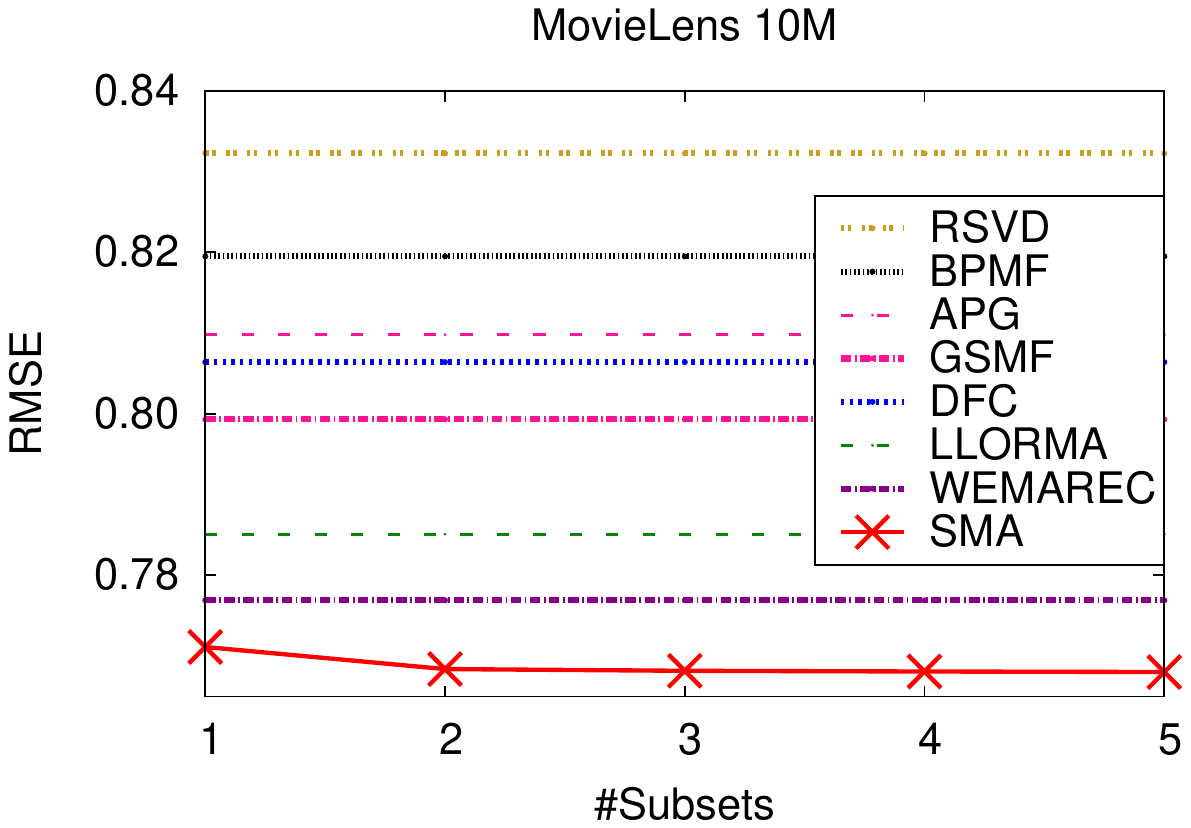}
      \includegraphics[width=0.49\textwidth]{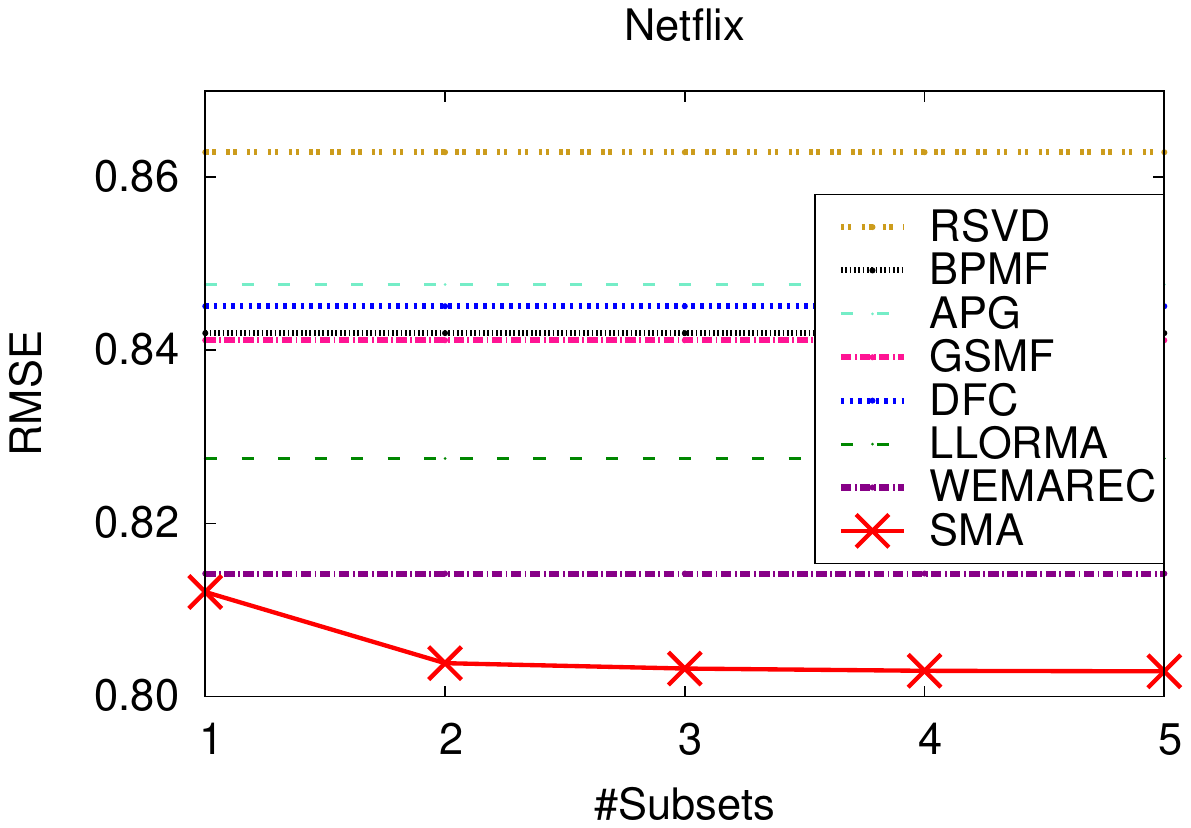}
      \end{array}$
      \caption{Effect of subset number $K$ on the MovieLens 10M dataset (left) and Netflix dataset (right).
    SMA models are indicated by solid lines and other compared methods are indicated by dotted lines.}
    \label{fig:effectOfK}
\end{figure*}

\begin{figure*}[tbh!]
      \centering$
      \begin{array}{cc}
      \includegraphics[width=0.49\textwidth]{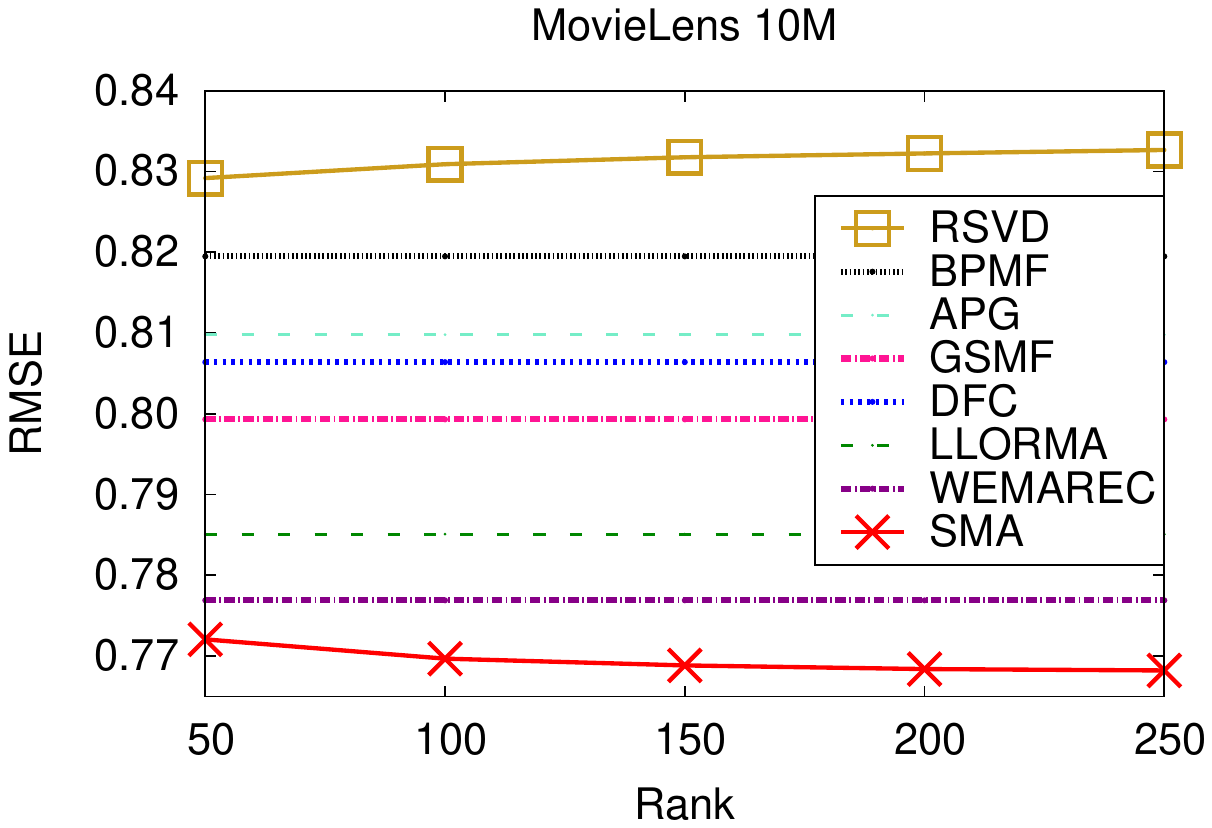}
      \includegraphics[width=0.49\textwidth]{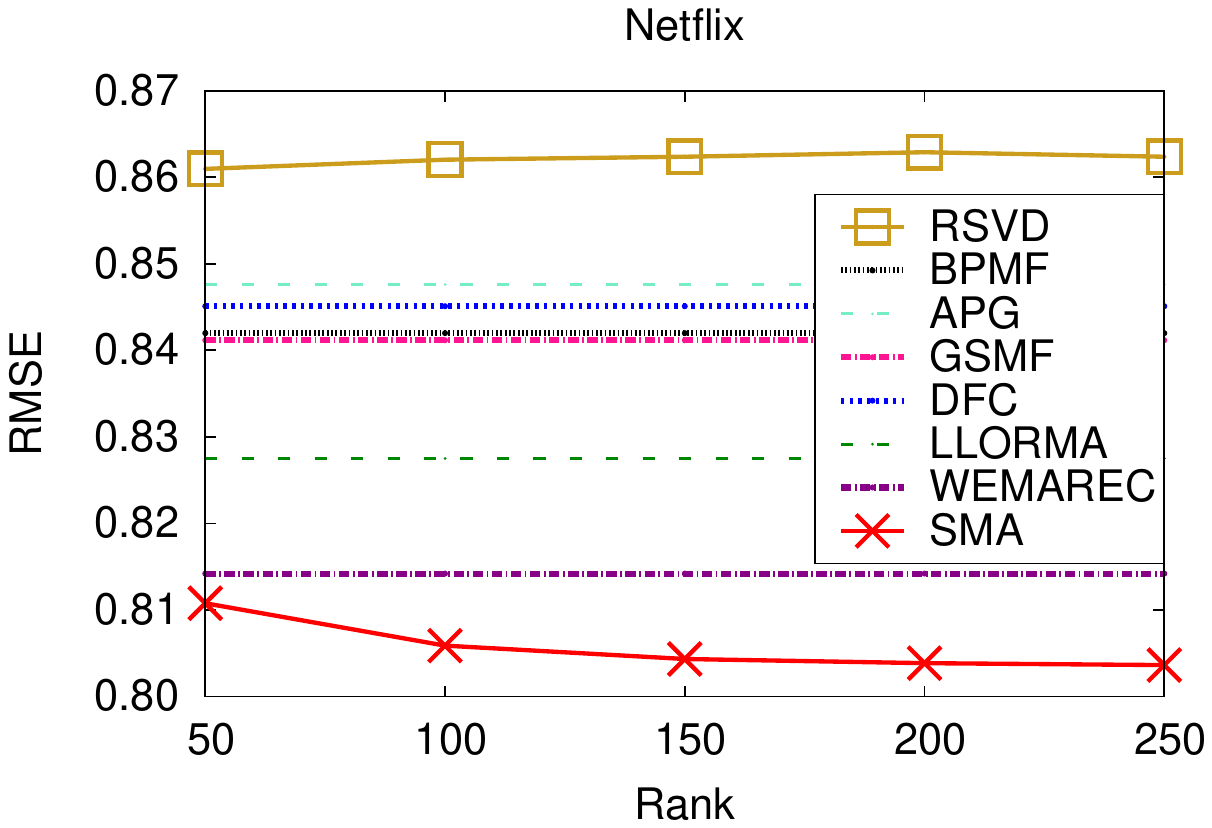}
      \end{array}$
      \caption{Effect of rank $r$ on the MovieLens 10M dataset (left) and Netflix dataset (right).
      SMA and RSVD models are indicated by solid lines and other compared methods
      are indicated by dotted lines.}
      \label{fig:effectOfRank}
\end{figure*}

Figure~\ref{fig:effectOfK}  investigates how SMA performs by
varying number of non-overlapping subsets $K$ (rank $r=200$)
and the optimal RMSEs of all compared methods on both Movielens 10M (left) and Netfilx (right) datasets.
As we can see, SMA outperforms all these state-of-the-art methods with $K$ varying from 1 to 5.
It should be noted that, when $K=0$, SMA is degraded to RSVD.
Thus, the fact that SMA can produce better recommendations than RSVD
confirms Theorem~\ref{thm:stable_1}:
with additional terms $\sum_{s=1}^K{\lambda_s\mathcal{D}_{\Omega_s}(\hat{R})}$,
we can improve the stability of MA models.
In addition, we can see that the RMSEs on both datasets decrease
as $K$ increases. 
This further confirms Theorem~\ref{thm:stable_4}:
probing easily predictable entries to form harder predictable
entry sets can better increase the model performance.

Figure~\ref{fig:effectOfRank} analyzes the effect of rank $r$
on MovieLens 10M (left) and Netflix (right) datasets by fixing $K=3$.
It can be seen that for any rank $r$ from 50 to 250, SMA always
outperforms the other seven compared methods in recommendation accuracy.
And higher ranks for SMA will lead to better accuracy
when the rank $r$ increases from $50$ to $250$ on both two datasets.
It is interesting to see that the recommendation accuracies of RSVD
decrease slightly when $r>50$ due to over-fitting and SMA can consistently
increase recommendation accuracy even when $r>200$.
This indicates that SMA is less prone to over-fitting
than RSVD, i.e., SMA is more stable than RSVD.

\subsubsection{Accuracy Comparisons}
\begin{table}[tb!]
\caption{RMSEs of SMA and the seven compared methods on MovieLens (10M) and Netflix datasets.
}
\centering
\begin{tabular}{| c | c | c |}
\hline
                        &  MovieLens (10M)         &  Netflix        \\
\hline
  RSVD                  &   0.8256 $\pm$ 0.0006    &    0.8534 $\pm$ 0.0001        \\
\hline
  BPMF                  &   0.8197 $\pm$ 0.0004    &    0.8421 $\pm$ 0.0002       \\
\hline
  APG                   &   0.8101 $\pm$ 0.0003    &    0.8476 $\pm$ 0.0003        \\
\hline
  GSMF                  &   0.8012 $\pm$ 0.0011    &    0.8420 $\pm$ 0.0006         \\
\hline
  DFC                   &   0.8067 $\pm$ 0.0002    &    0.8453 $\pm$ 0.0003        \\
\hline
  LLORMA                &   0.7855 $\pm$ 0.0002    &    0.8275 $\pm$ 0.0004        \\
\hline
  WEMAREC               &   0.7775  $\pm$ 0.0007   &    0.8143  $\pm$ 0.0001        \\
\hline
  \textbf{SMA}   &   \textbf{0.7682 $\pm$ 0.0003}   &    \textbf{0.8036 $\pm$ 0.0004}       \\
\hline
\end{tabular}
\label{tbl:accuracyCmp}
\end{table}


Table~\ref{tbl:accuracyCmp} presents the performance of
SMA with rank $r = 200$ and subset number $K = 3$.
The compared methods are as follows:
RSVD ($r=50$)~\cite{paterek2007improving}, BPMF ($r=300$)~\cite{salakhutdinov2008bayesian},
APG ($r=100$)~\cite{toh2010accelerated}, GSMF ($r=20$)~\cite{yuan2014recommendation},
DFC ($r=30$)~\cite{mackey2011divide},
LLORMA ($r=20$)~\cite{lee2013local} and WEMAREC ($r=20$)~\cite{Chen15}
on MovieLens 10M and Netflix datasets.
Notably, DFC, LLORMA and WEMAREC are ensemble methods,
which have been shown to be more accurate than single methods due to better generalization performance.
However, as shown in Table~\ref{tbl:accuracyCmp}, our SMA method
significantly outperforms all seven compared methods on both two datasets.
This confirms that SMA can indeed achieve better generalization performance
than both state-of-the-art single methods and ensemble methods. The main reason is that
SMA can minimize objective functions that lead to solutions with good generalization performance,
but other methods cannot guarantee low gap between training error
and test error.

\subsubsection{Performance under Data Sparsity}

\begin{figure*}[tbh!]
      \centering
      \includegraphics[width=0.5\textwidth]{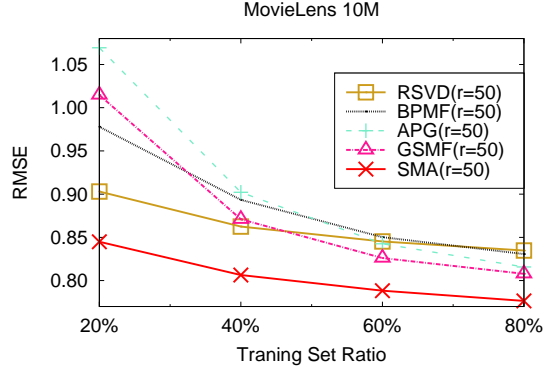}
      \caption{RMSEs of SMA and four single methods with varying training set size
      on the MovieLens 10M dataset (rank $r=50$).}
      \label{fig:trainRatio}
\end{figure*}

Figure~\ref{fig:trainRatio} presents the RMSEs of SMA vs. the training set size
as compared with four single LRMA methods (RSVD, BPMF, APG and GSMF).
The rank $r$ of all five methods are fixed to $50$. Note that, the rating
density becomes more sparse when the training set ratio decreases.
The results show that all methods can improve accuracy when the training set size
increases, but the proposed SMA method always outperforms the other methods.
This demonstrates that SMA can still provide stable matrix approximation even on
very sparse dataset.

\subsection{Top-N Recommendation Evaluation}

\subsubsection{Generalization Performance}

\begin{figure*}[tbh!]
      \centering$
      \begin{array}{ccc}
      \includegraphics[width=0.33\textwidth]{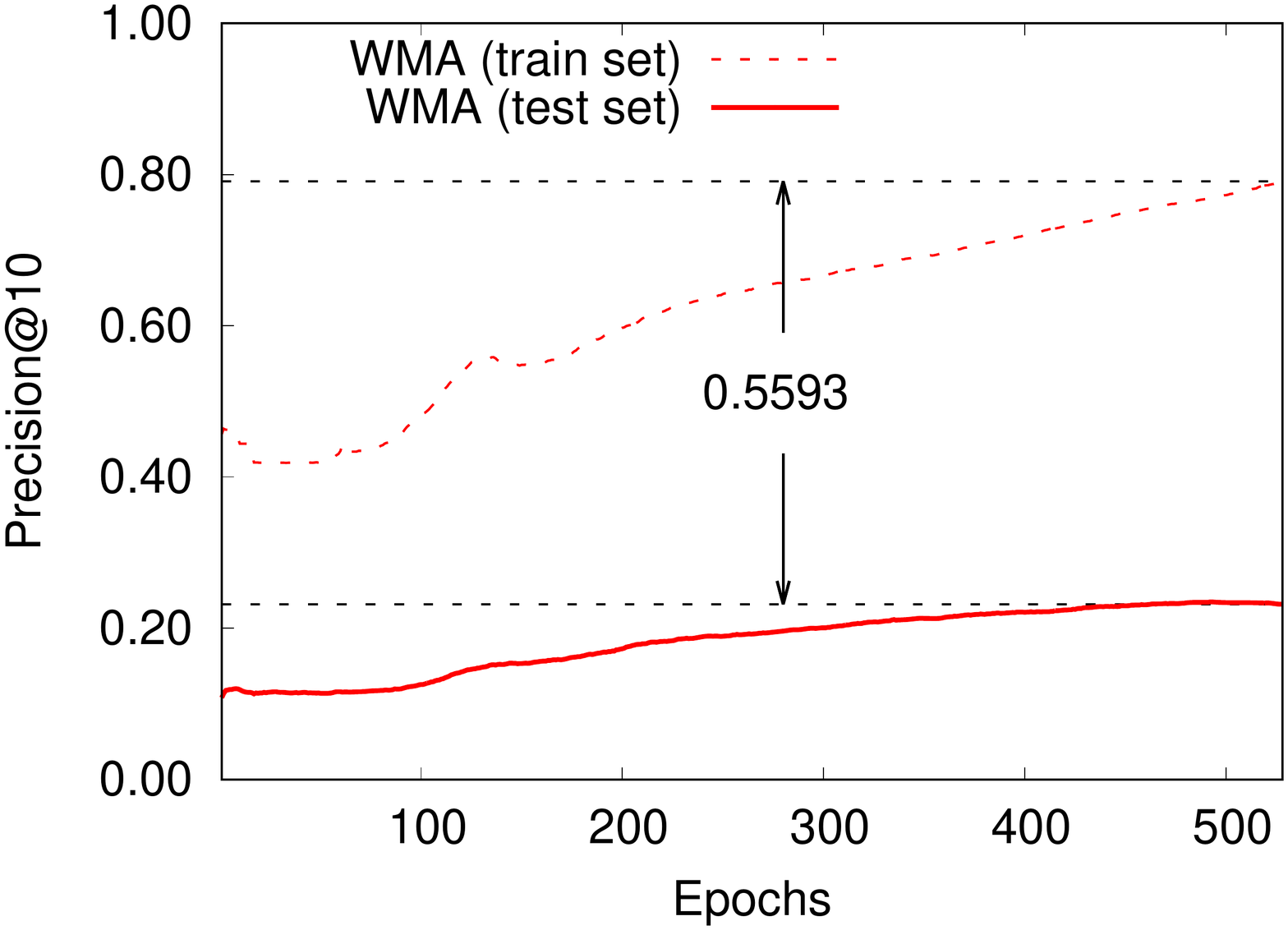}
      \includegraphics[width=0.33\textwidth]{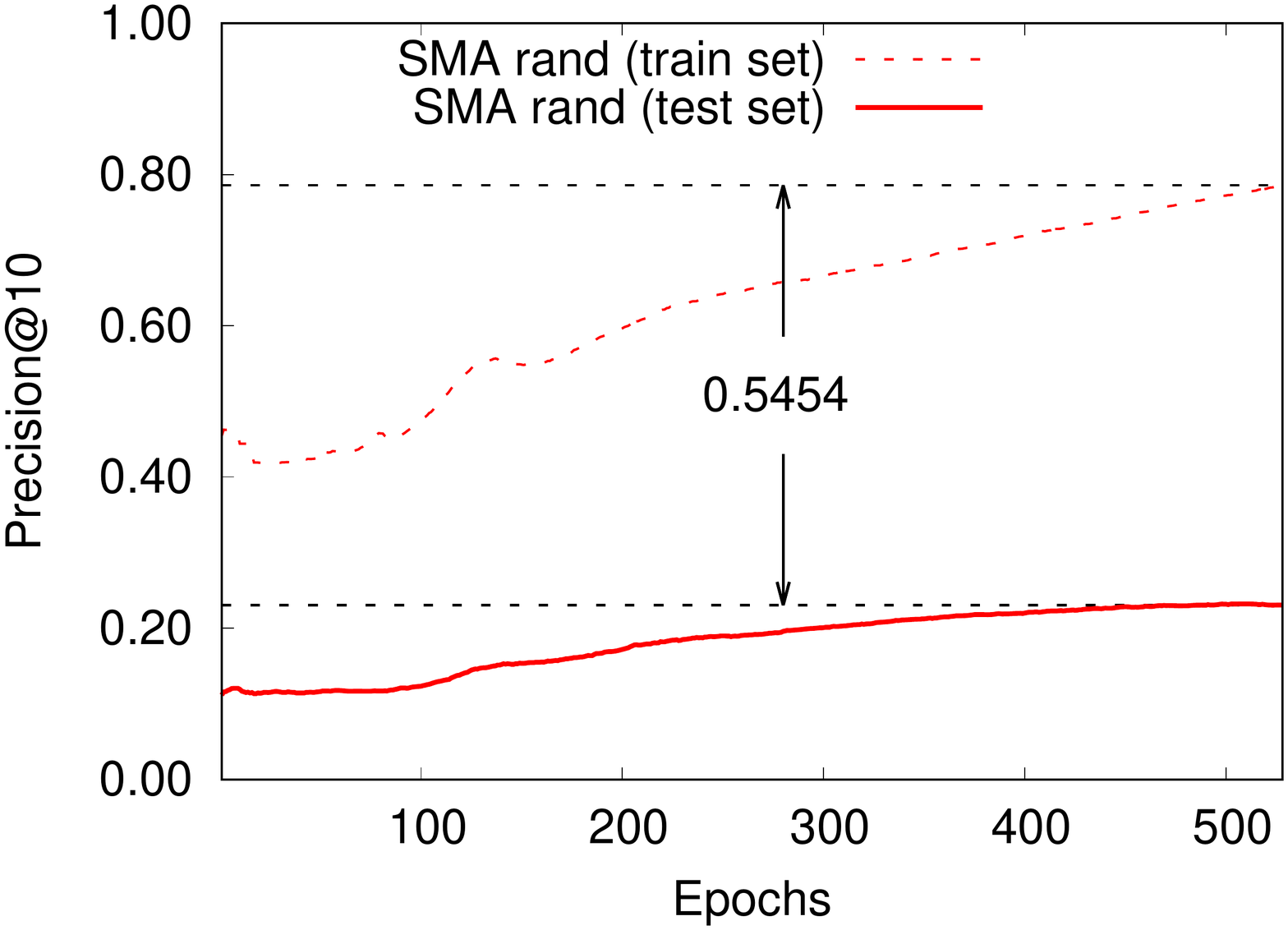}
      \includegraphics[width=0.33\textwidth]{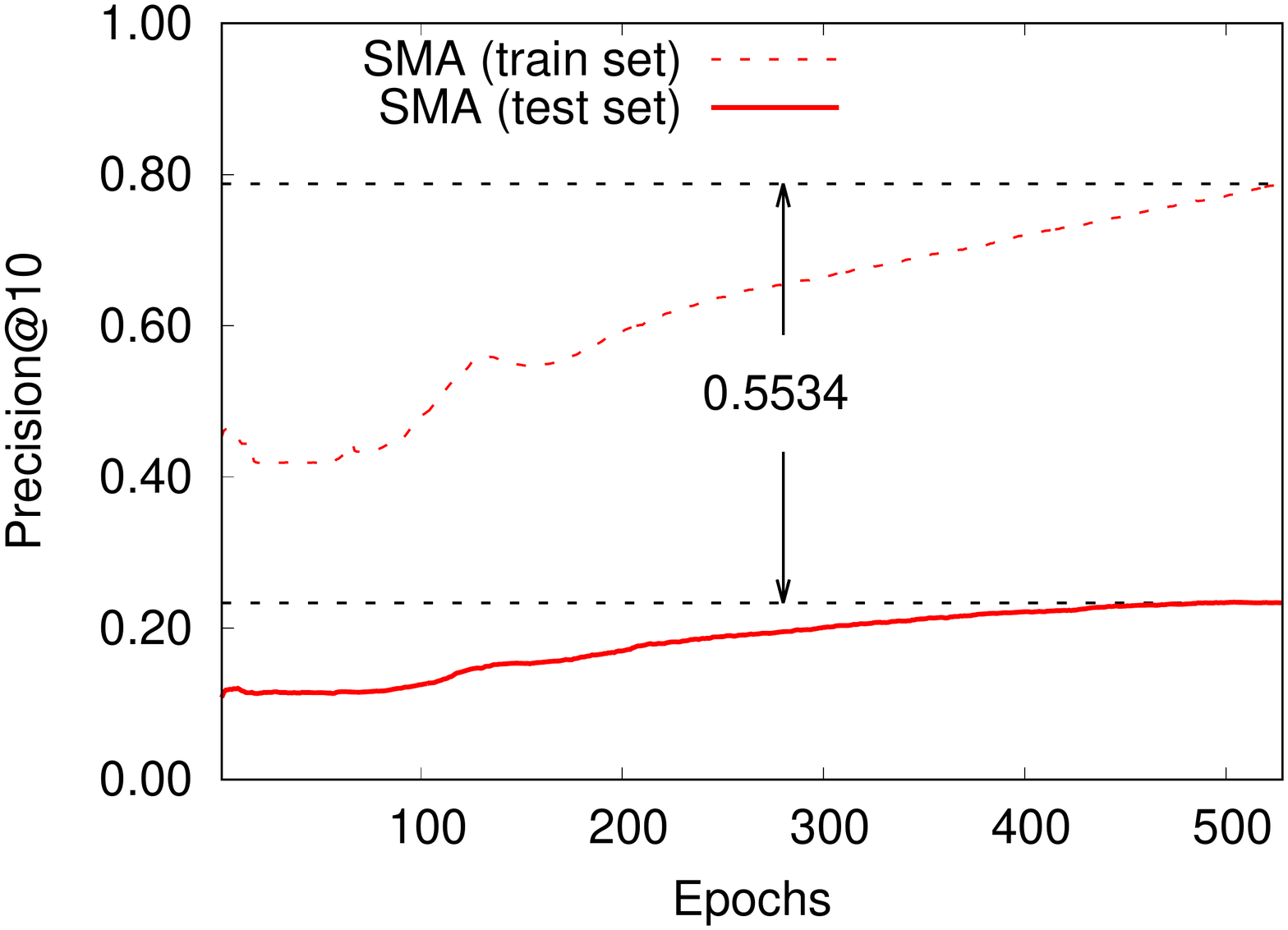}
      \end{array}$
      \caption{Generalization performance analysis of SMA with weighted Exp loss on MovieLens 100K dataset.
      The leftmost figure shows a weighted MA (WMA) methods without selecting $\Omega'$,
      which is equivalent to SMA with $\Omega'=\emptyset$.
      The middle figure shows SMA with random selected $\Omega'$.
      The rightmost figure shows SMA with hard predictable $\Omega'$, i.e.,
      examples with predicted ratings in [-0.3,0.3] are selected as $\Omega'$.}
    \label{fig:gen_topn}
\end{figure*}

Figure~\ref{fig:gen_topn} compares the generalization performance of three different
matrix approximation methods in the top-N setting:
1) weighted matrix approximation (WMA), which is the same as SMA except that WMA has no
additional terms in the loss function, i.e., WMA learns MA models by optimizing Equation~\ref{eqn:weighted_loss};
2) stable matrix approximation with random $\Omega'$ (SMA rand), in which $\Omega'$ is randomly selected
rather than selecting the examples near the decision boundary;
and 3) stable matrix approximation, in which $\Omega'$ is selected as the examples
near the decision boundary.

As shown in Figure~\ref{fig:gen_topn}, both SMA rand and SMA can achieve lower gap between
training precision@10 and test precision@10 than WMA, which confirms with Theorem~\ref{thm:stable_5}:
adding the term $\mathcal{D}_{\Omega'}$ in the loss function can improve the stability of MA models,
i.e., improve the generalization performance. For SMA rand, we can see that, although its
generalization performance improves, its optimization performance (training accuracy) is affected
due to the random selection of  $\Omega'$. This demonstrates that random selection of $\Omega'$
can improve generalization performance but degrade optimization performance simultaneously.
Therefore, the test accuracy of SMA rand has no improvement compared with WMA.
For SMA with hard predictable $\Omega'$, the optimization performance
(training accuracy) is not affected but improved compared with WMA.
This is because examples near the decision boundary are given larger gradients updates in SMA,
which can help improve the optimization performance because examples near the decision
boundary have higher optimization error.
Meanwhile, compared with WMA, the generalization performance of SMA is also improved, which is
due to the stability improvement from the selection of $\Omega'$. Since both the optimization performance and
generalization performance are improved in SMA for top-N recommendation, SMA achieves significantly
higher test accuracy than WMA.

This experiment demonstrates that selecting random subsets can improve generalization performance but
cannot improve test accuracy due to the degradation of training accuracy. On the contrary, adding
properly selected subsets, e.g., examples near the decision boundary, can improve generalization
performance and does not hurt training accuracy, and can thus improve test accuracy.

\subsubsection{Sensitivity of Surrogate Losses}

\begin{figure*}[tbh!]
      \centering$
      \begin{array}{ccc}
      \includegraphics[width=0.33\textwidth]{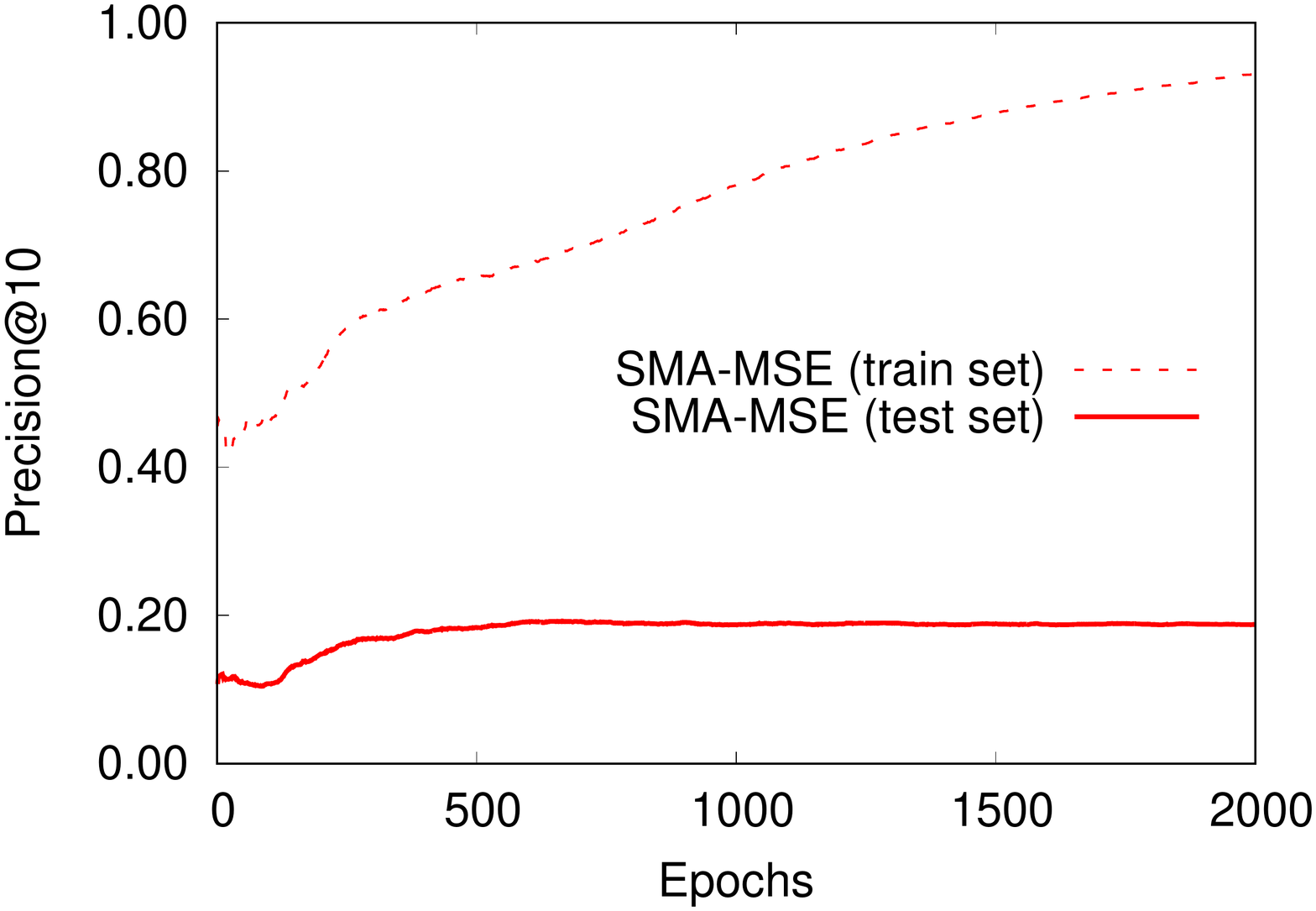}
      \includegraphics[width=0.33\textwidth]{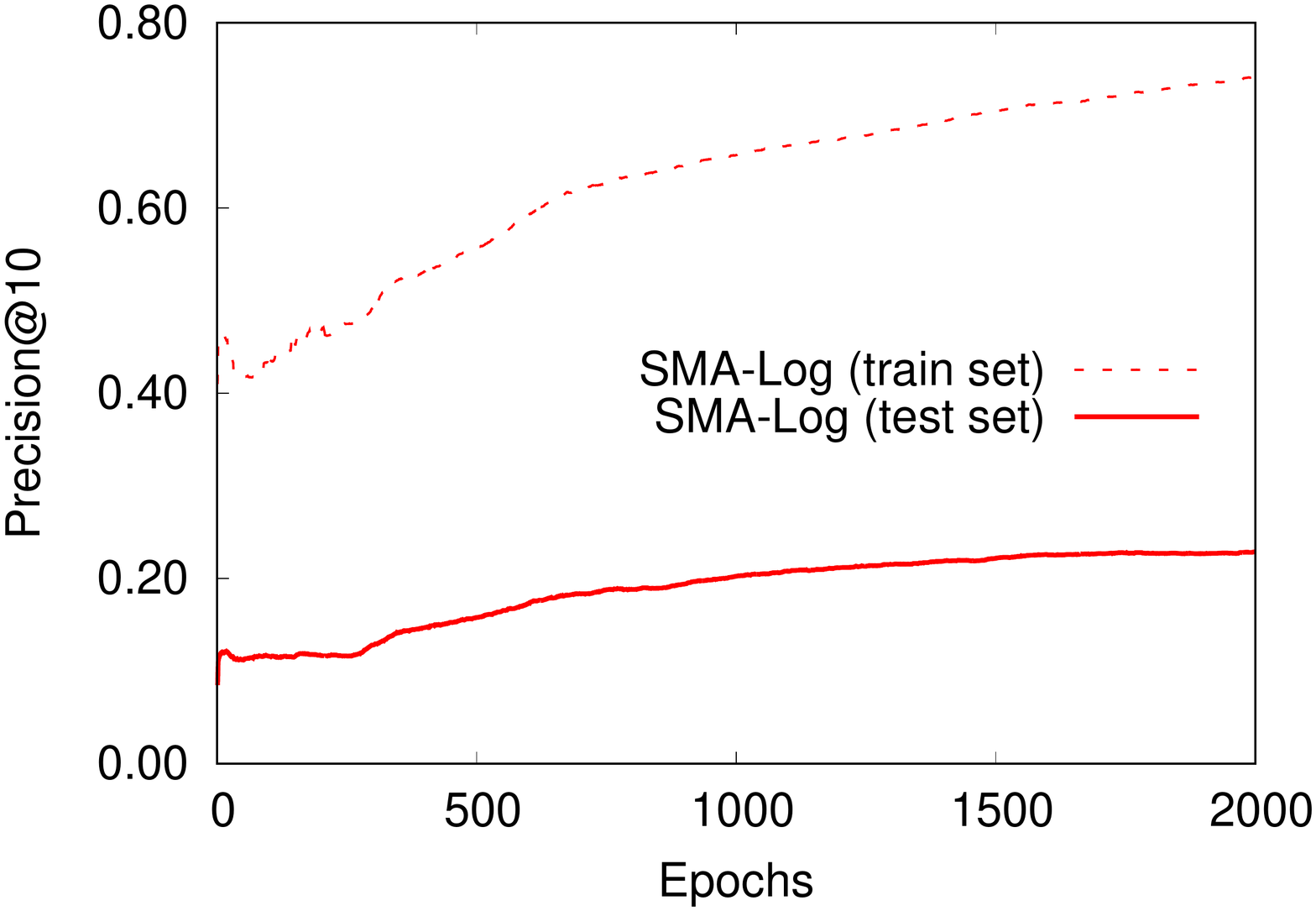}
      \includegraphics[width=0.33\textwidth]{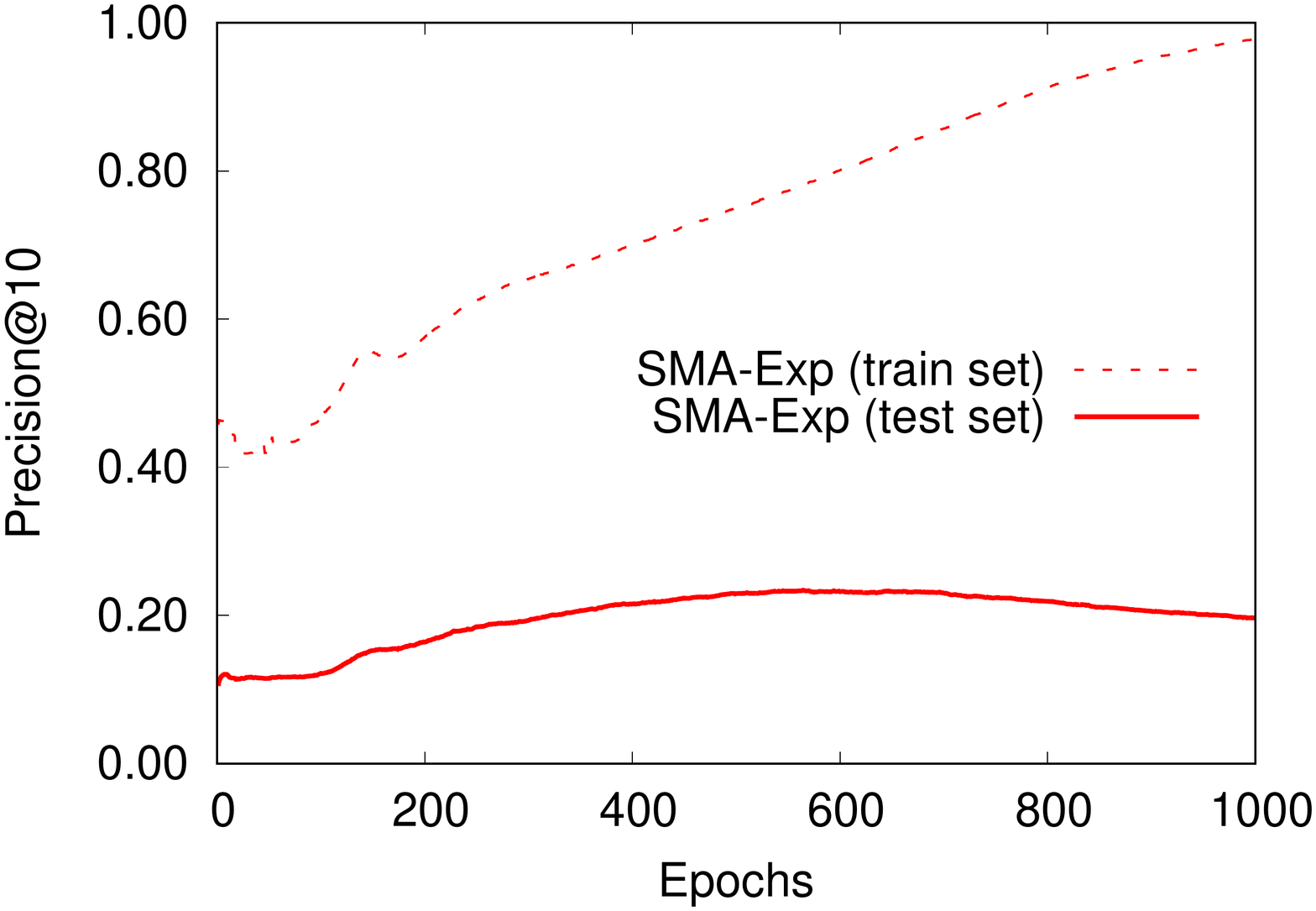}
      \end{array}$
      \caption{Comparison of SMA with different surrogate losses on the MovieLens 100K dataset.
      Here, we choose $\gamma = 0.3$ and rank $r=100$.}
    \label{fig:surrogate}
\end{figure*}
Figure~\ref{fig:surrogate} compares the performance of SMA with three different surrogate loss functions:
1) mean square error (MSE); 2) log loss (Log) and 3) exponential loss (Exp). As shown in Figure~\ref{fig:surrogate},
SMA with mean square loss achieves much worse test accuracy compared with SMA with log loss and
exponential loss, which is because mean square loss is a rating-based loss rather than a ranking-based
loss. SMA with log loss and exponential loss achieve very similar test accuracy. However, the convergence
speed of SMA with exponential loss is much faster (around 500 epochs) than that of SMA with log loss (around 2000 epochs).
Therefore, SMA with exponential loss is more desirable due to decent accuracy and faster convergence speed.

\subsubsection{Accuracy vs. $\gamma$}

\begin{figure*}[tbh!]
      \centering$
      \begin{array}{cc}
      \includegraphics[width=0.475\textwidth]{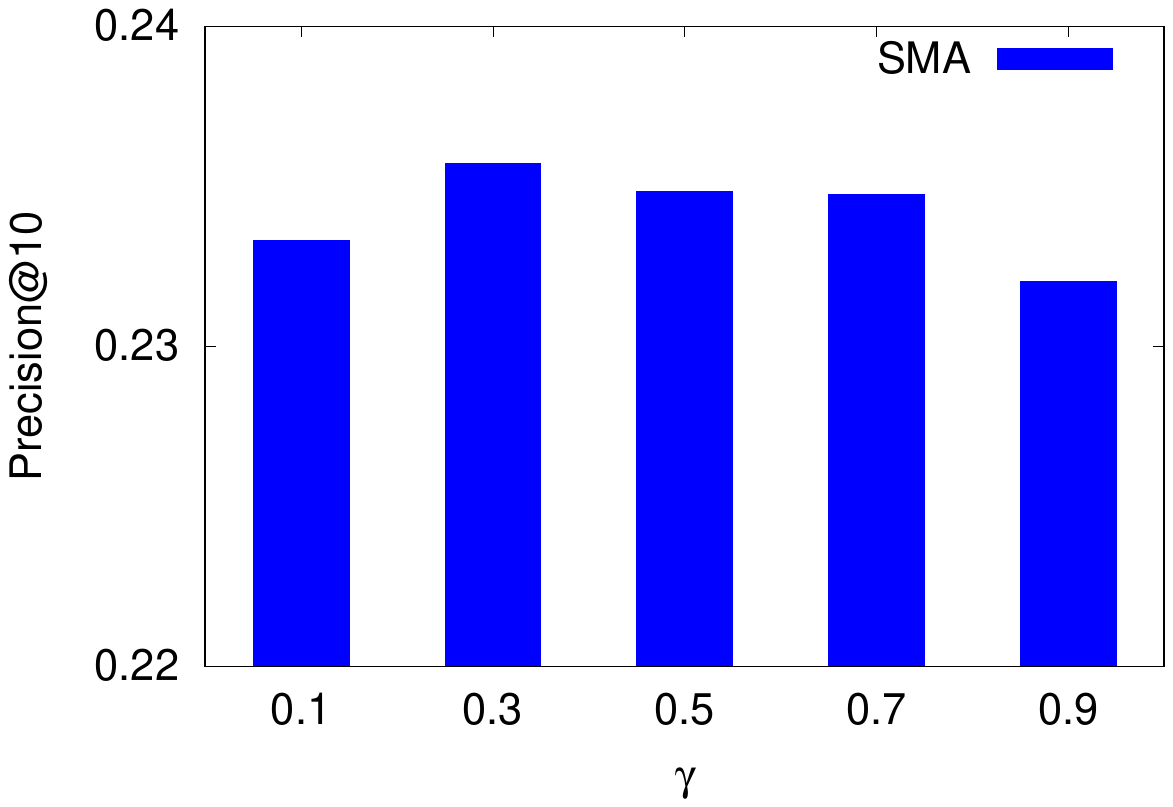}\hfill
      \includegraphics[width=0.49\textwidth]{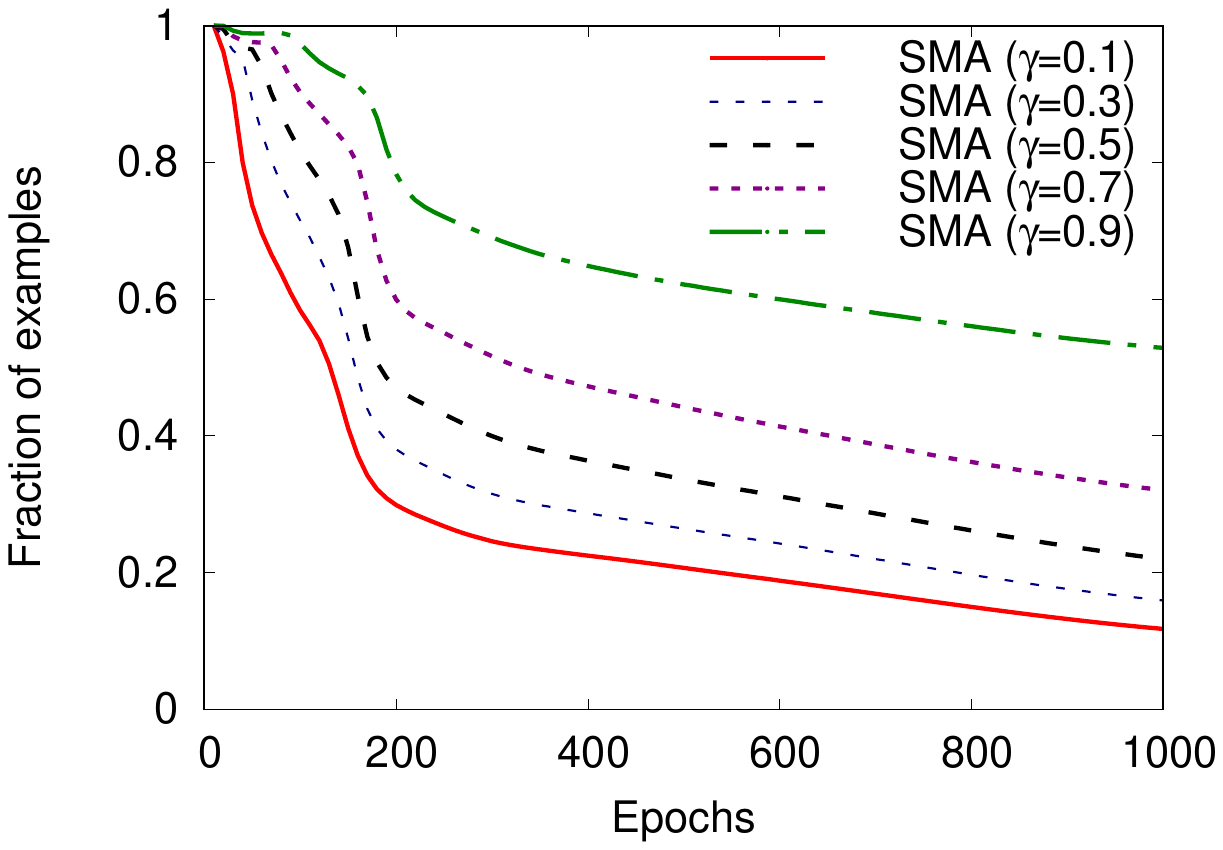}
      \end{array}$
      \caption{Accuracy vs. $\gamma$ values of SMA on MovieLens 100K dataset.
      The figure on the left shows how the Precision@10 varies as $\gamma$ changes from 0.1 to 0.9,
      and the figure on the right shows the ratio of selected examples in $\Omega'$ with different
      $\gamma$ values.}
    \label{fig:gamma}
\end{figure*}

Figure~\ref{fig:gamma} compares the performance of SMA with different $\gamma$ values, in which
training examples with predicted ratings ranging in $[-\gamma, \gamma]$ are selected as $\Omega'$ for each epoch.
The figure on the left shows how the recommendation accuracy varies as $\gamma$ increases from 0.1 to 0.9,
and the figure on the right shows the ratio of selected examples in $\Omega'$ with different
$\gamma$ values.
As shown in Figure~\ref{fig:gamma}, SMA with $\gamma=0.3$ achieves the best test accuracy.
For smaller $\gamma$, e.g., 0.1, or larger $\gamma$, e.g., 0.9, SMA achieves worse accuracy
because too small or too large fraction of examples are chosen in $\Omega'$.
This further confirms that the accuracy of SMA will vary with different $\Omega'$ and
properly selected $\Omega'$ can help achieve better accuracy. 

\subsubsection{Accuracy Comparison}

\begin{table*}[tb!]
\caption{Precision comparison between SMA
and one rating-based MA method (RSVD) and four state-of-the-art top-N recommendation methods (BPR, WRMF, AOBRP, SLIM)
on the Movielens 1M and Movielens 100K datasets.
Bold face means that SMA statistically significantly outperforms the other methods with 95\% confidence level.
}
\centering
\normalsize
\begin{tabular}[t]{|c|c|c|c|c|c|c|c|c|c|}
\hline
\multicolumn{2}{|c|} {Metric} & \multicolumn{4}{c|} {Precision$@$N} \\
\hline
\multicolumn{2}{|c|}{Data $|$ Method} & N=1 & N=5 & N=10 & N=20 \\
\hline
\multirow{6}{*}{\rotatebox{90}{ML-1M}}    & RSVD    & 0.1659 $\pm$ 0.0017 & 0.1263 $\pm$ 0.0005 & 0.1037 $\pm$ 0.0009
                                                    & 0.0766 $\pm$ 0.0020 \\
                                            & BPR   & 0.3062 $\pm$ 0.0030 & 0.2277 $\pm$ 0.0074 & 0.1896 $\pm$ 0.0048
                                                    & 0.1516 $\pm$ 0.0007 \\
                                            & WRMF  & 0.2761 $\pm$ 0.0074 & 0.2155 $\pm$ 0.0009 & 0.1816 $\pm$ 0.0007
                                                    & 0.1459 $\pm$ 0.0004 \\
                                            & AOBPR & 0.3098 $\pm$ 0.0076 & 0.2315 $\pm$ 0.0002 & 0.1926 $\pm$ 0.0022
                                                    & {0.1540 $\pm$ 0.0016} \\
                                            & SLIM  & 0.3053 $\pm$ 0.0097 & 0.2208 $\pm$ 0.0039 & 0.1836 $\pm$ 0.0006
                                                    & 0.1419 $\pm$ 0.0029 \\
                                           & \textbf{SMA} & \textbf{0.5133 $\pm$ 0.0047} & \textbf{0.3937 $\pm$ 0.0064}
                          & \textbf{0.3258 $\pm$ 0.0061} & \textbf{0.2548 $\pm$ 0.0051} \\
\hline
\multirow{6}{*}{\rotatebox{90}{ML-100K}}    & RSVD   & 0.3155 $\pm$ 0.0038 & 0.2179 $\pm$ 0.0007 & 0.1403 $\pm$ 0.0035
                                                     & 0.1300 $\pm$ 0.0057 \\
                                            & BPR    & 0.3439 	$\pm$	0.0168 & 0.2533 	$\pm$	0.0082
                                                     & 0.2061 	$\pm$	0.0040 & 0.1581 	$\pm$	0.0028 \\
                                            & WRMF   & 0.3851	$\pm$	0.0116 & 0.2752 	$\pm$	0.0053
                                                     & 0.2202 	$\pm$	0.0056 & 0.1679 	$\pm$	0.0035 \\
                                            & AOBPR  & 0.3395 	$\pm$	0.0099 & 0.2591 	$\pm$	0.0057
                                                     & 0.2119 	$\pm$	0.0031 & 0.1632 	$\pm$	0.0025\\
                                            & SLIM   & 0.3951 	$\pm$	0.0056 & 0.2625 	$\pm$	0.0090
                                                     & 0.2055 	$\pm$	0.0031
                                                     & 0.1539 	$\pm$	0.0015 \\
                                           & \textbf{SMA} & \textbf{0.4179 	$\pm$	0.0072}
                                                            & \textbf{0.2931 	$\pm$	0.0023}
                                                            & \textbf{0.2347 	$\pm$	0.0033}
                                                            & \textbf{0.1772 	$\pm$	0.0022} \\
\hline
\end{tabular}
\label{tab:precision}
\end{table*}

\begin{table*}[tb!]
\caption{NDCG comparison between SMA
and one rating-based MA method (RSVD) and four state-of-the-art top-N recommendation methods (BPR, WRMF, AOBRP, SLIM)
on the Movielens 1M and Movielens 100K datasets.
Bold face means that SMA statistically significantly outperforms the other methods with 95\% confidence level.
}
\centering
\normalsize
\begin{tabular}[t]{|c|c|c|c|c|c|c|c|c|c|}
\hline
\multicolumn{2}{|c|} {Metric} & \multicolumn{4}{c|} {NDCG$@$N} \\
\hline
\multicolumn{2}{|c|}{Data $|$ Method} & N=1 & N=5 & N=10 & N=20\\
\hline
\multirow{6}{*}{\rotatebox{90}{ML-1M}}    & RSVD    & 0.0324 $\pm$ 0.0020 & 0.0700 $\pm$ 0.0006 & 0.0864 $\pm$ 0.0002
                                                    & 0.1006 $\pm$ 0.0001 \\
                                            & BPR   & 0.0538 $\pm$ 0.0006 & 0.1235 $\pm$ 0.0003 & 0.1601 $\pm$ 0.0035
                                                    & 0.2070 $\pm$ 0.0011 \\
                                            & WRMF  & 0.0510 $\pm$ 0.0013 & 0.1202 $\pm$ 0.0002 & 0.1563 $\pm$ 0.0013
                                                    & 0.2012 $\pm$ 0.0010 \\
                                            & AOBPR & 0.0532 $\pm$ 0.0018 & 0.1200 $\pm$ 0.0006 & 0.1567 $\pm$ 0.0009
                                                    & 0.2021 $\pm$ 0.0009 \\
                                            & SLIM & 0.0551 $\pm$ 0.0015 & 0.1201 $\pm$ 0.0023 & 0.1586 $\pm$ 0.0028
                                                    & 0.1948 $\pm$ 0.0043 \\
                                           & \textbf{SMA} & \textbf{0.0729 $\pm$ 0.0007} & \textbf{0.1758 $\pm$ 0.0021}
                          & \textbf{0.2346 $\pm$ 0.0034} & \textbf{0.3002 $\pm$ 0.0048} \\
\hline
\multirow{6}{*}{\rotatebox{90}{ML-100K}}    & RSVD   & 0.0389 $\pm$ 0.0028 & 0.1047 $\pm$ 0.0032 & 0.0996 $\pm$ 0.0059
                                                     & 0.1393 $\pm$ 0.0071 \\
                                            & BPR    & 0.0783 	$\pm$	0.0036
                                                     & 0.1803 	$\pm$	0.0056
                                                     & 0.2351 	$\pm$	0.0056
                                                     & 0.2929 	$\pm$	0.0065 \\
                                            & WRMF   & 0.0913 	$\pm$	0.0034
                                                     & 0.1989 	$\pm$	0.0030
                                                     & 0.2535 	$\pm$	0.0045
                                                     & 0.3131 	$\pm$	0.0043 \\
                                            & AOBPR  & 0.0770 	$\pm$	0.0043
                                                     & 0.1801 	$\pm$	0.0044
                                                     & 0.2343 	$\pm$	0.0051
                                                     & 0.2930 	$\pm$	0.0058 \\
                                            & SLIM   & 0.0912 	$\pm$	0.0021
                                                     & 0.1967 	$\pm$	0.0036
                                                     & 0.2476 	$\pm$	0.0050
                                                     & 0.3017 	$\pm$	0.0091 \\
                                           & \textbf{SMA} & \textbf{0.1013 	$\pm$	0.0041}
                                                     & \textbf{0.2136 	$\pm$	0.0050}
                                                     & \textbf{0.2708 	$\pm$	0.0052}
                                                     & \textbf{0.3321 	$\pm$	0.0056} \\
\hline
\end{tabular}
\label{tab:ndcg}
\end{table*}

Table~\ref{tab:precision} and Table~\ref{tab:ndcg} compare SMA's accuracy in top-N recommendation (Precision@N and NDCG@N)
with one rating-based MA method (RSVD) and four state-of-the-art top-N recommendation methods (BPR, WRMF, AOBRP, SLIM)
on the Movielens 1M and Movielens 100K datasets. Among the compared methods, RSVD is a baseline method,
which aims at minimizing rating-based error. WRMF, BPR and AOBPR are matrix approximation-based
collaborative filtering methods for top-N recommendation. SLIM is not an MA-based method, but we
compare with it due to its superior accuracy in the top-N recommendation task.
Here, we choose exponential loss for SMA with boundary margin $\gamma = 0.3$, rank $r = 200$,
$\lambda_0=\lambda_1=1$, $W_{i,j} = 1$ for positive ratings and $W_{i,j} = 0.03$ for negative ratings.

As shown in the results, SMA consistently and significantly outperforms all the five compared methods
on both datasets in terms of Precision@N and NDCG@N with N varying from 1 to 20. This is because
all the other methods do not consider generalization performance but only optimization performance in model learning,
which cannot ensure optimal test accuracy due to low generalization performance.
Note that WRMF can be regarded as a special case of SMA if we adopt mean square surrogate loss and do not
add subset to improve stability for SMA. Similar to the previous experiments, SMA significantly outperforms
WRMF, which further confirms that introducing properly selected subset can improve the accuracy of
collaborative filtering in the top-N recommendation task.

\section{Related Work}
\label{sec:related}

Algorithmic stability has been analyzed and applied
in several popular problems, such as regression~\cite{Bousquet01},
classification~\cite{Bousquet01}, ranking~\cite{Lan08}, marginal inference~\cite{London13}, etc.
\cite{Bousquet01} first proposed a method of obtaining bounds on
generalization errors of learning algorithms, 
and formally proved that regularization networks posses the uniform stability property. 
Then, \cite{Bousquet02} extends the algorithmic stability concept
from regression to classification.
\cite{Kutin02} generalized the work of \cite{Bousquet01} and proposed
the notion of training stability, which can ensure good generalization error
bounds even when the learner has infinite VC dimension.
\cite{Lan08} proposed query-level stability and gave query-level
generalization bounds to learning to rank algorithms.
\cite{Agarwal09} derived generalization bounds for ranking algorithms that
have good properties of algorithmic stability.
\cite{Shalev-Shwartz10} considered the general learning setting including
most statistical learning problems as special cases, and identified that
stability is the necessary and sufficient condition for learnability.
\cite{London13} proposed the concept of collective stability for structure
prediction, and established generalization bounds for structured prediction.
This work differs from the above works in that  (1) this work introduces the stability concept
to matrix approximation problem, and proves that matrix approximations with
high stability will have high probability to generalize well and
(2) most existing works focus on theoretical analysis, but this work provides
a practical framework for achieving solutions with high stability.

Matrix approximation methods have been extensively studied recently
in the context of collaborative filtering.
\cite{lee2001algorithms} analyzed the optimization problems of Non-negative
Matrix Factorization (NMF). \cite{Srebro04MMMF} proposed Maximum-Margin Matrix
Factorization (MMMF), which can learn low-norm factorizations by solving
a semi-definite program to achieve collaborative prediction.
\cite{mnih2007probabilistic} viewed matrix factorization from a probabilistic perspective
and proposed Probabilistic Matrix Factorization (PMF).
Later, they proposed Bayesian Probabilistic
Matrix Factorization (BPMF)~\cite{salakhutdinov2008bayesian} by giving a fully
Bayesian treatment to PMF. \cite{Lawrence09} also extends PMF and developed
a non-linear PMF using Gaussian process latent variable models.
\cite{paterek2007improving} applied regularized singular value decomposition (RSVD)
in the Netflix Prize contest.
\cite{Koren08} combined matrix factorization and neighborhood model and built
a more accurate combined model named SVD++.
\cite{Hu08} proposed a weighted matrix approximation method for top-N recommendation
on implicit feedback data, which gives higher weights to positive examples and
lower weights for negative examples.
\cite{rendle09} proposed a pair-wise loss function to optimize ranking
measure in top-N recommendation and proposed the BPR method.
Later, they improved the BPR method by proposing a non-uniform item sampler
and oversampling informative pairs to improve convergence speed~\cite{Rendle14}.
Many of the above methods tried to solve overfitting problems in model training,
e.g., regularization in most of the above methods and Bayesian treatment in BPMF.
However, alleviating overfitting cannot decrease the lower bound of generalization errors,
and thus cannot fundamentally solve the low generalization performance problem.
Different from the above works, this work proposes a new optimization
problem with smaller lower bound of generalization error. Minimizing
the new loss function can substantially improve generalization performance
of matrix approximation as demonstrated in the experiments.
\cite{Srebro04} analyzed the generalization error bounds of collaborative
prediction with low-rank matrix approximation for ``0-1'' recommendation.
\cite{Candes10} established error bounds of matrix completion problem with noises.
However, those works did not consider how to achieve matrix approximation
with small generalization error.

Ensemble methods, such as ensemble MMMF~\cite{DeCoste06}, DFC~\cite{mackey2011divide},
LLORMA~\cite{lee2013local}, WEMAREC~\cite{Chen15}, ACCAMS~\cite{Beutel15} etc., have been proposed,
which aimed to provide matrix approximations with high generalization performance
by ensemble learning. However, those ensemble methods need to train a number of
biased weak matrix approximation models, which require much more computations than
SMA. In addition, weak models in those methods are generated by heuristics
which are not directly related to minimizing generalization error. Therefore,
the optimality of generalization performance of those methods cannot be 
proved as in this work.

\section{Conclusion}
\label{sec:conclusion}
Matrix approximation methods are widely adopted in collaborative filtering applications.
However, similar to other machine learning techniques, many
existing matrix approximation methods suffer from the low generalization
performance issue in sparse, incomplete and noisy data, which degrade the stability
of collaborative filtering. This paper introduces the stability notion to the
matrix approximation problem, in which models achieve high
stability will have better generalization performance. Then, SMA, a new 
matrix approximation framework, is proposed to achieve high stability, i.e., high
generalization performance, for collaborative filtering in both rating prediction and
top-N recommendation tasks. Experimental results on real-world datasets
demonstrate that the proposed SMA method can achieve better accuracy than
state-of-the-art matrix approximation methods and ensemble methods in both rating prediction and
top-N recommendation tasks.

\section*{Acknowledgement}
This work was supported in part by the National Natural Science
Foundation of China under Grant No. 61233016, and
the National Science Foundation of USA under Grant Nos.
0954157, 1251257, 1334351, and 1442971.

\bibliographystyle{plain}
\bibliography{ref}

\end{document}